\def\BibTeX{{\rm B\kern-.05em{\sc i\kern-.025em b}\kern-.08em
    T\kern-.1667em\lower.7ex\hbox{E}\kern-.125emX}}
\newtheorem{theorem}{Theorem}
\newtheorem{lemma}{Lemma}
\newtheorem{corollary}{Corollary}
\newtheorem{definition}{Definition}
\newtheorem{proof}{Proof}
\let\svlim\lim\def\lim{\svlim\limits}
\let\iff\Leftrightarrow
\let\epsilon\varepsilon
\newcommand\mytop[2]{\genfrac{}{}{0pt}{}{\hfil #1}{\hfill #2}}
\newcommand{\graph}{\mathcal{G}}
\newcommand{\grh}{\mathcal{H}}
\newcommand{\grP}{\mathcal{G}_\pr}
\newcommand{\grQ}{\mathcal{G}_\qr}
\newcommand{\gr}[1]{\graph_{#1}}
\newcommand{\fac}[1]{\Phi_{#1}}
\newcommand{\dmp}{\pr_{\grP}}
\newcommand{\dmq}{\qr_{\grQ}}
\newcommand{\partfacs}{\bm{\varphi}}
\newcommand{\partfac}{\varphi}
\newcommand{\sparts}{\ensuremath{ \bm{\mathcal{N}} }}
\newcommand{\spart}{N}
\newcommand{\bparts}{\ensuremath{ \bm{\mathcal{B}} }}
\newcommand{\settocli}{\kappa}
\newcommand{\settocligr}{\mathcal{K}}
\newcommand{\pr}{\mathbb{P}}
\newcommand{\qr}{\mathbb{Q}}
\newcommand{\prjt}{\mathbb{P}^\jt}
\newcommand{\xvars}{\bm{X}}
\newcommand{\zvars}{\bm{Z}}
\newcommand{\wvars}{\bm{W}}
\newcommand{\yvars}{\bm{Y}}
\newcommand{\wvals}{\bm{w}}
\newcommand{\xvals}{\bm{x}}
\newcommand{\yvals}{\bm{y}}
\newcommand{\zvals}{\bm{z}}
\newcommand{\dom}{\mathcal{X}}
\newcommand{\wdom}{\mathcal{W}}
\newcommand{\xdom}{\mathcal{X}}
\newcommand{\ydom}{\mathcal{Y}}
\newcommand{\zdom}{\mathcal{Z}}
\newcommand{\prygz}{\pr_{\yvars\mid\zvals}}
\newcommand{\qrygz}{\qr_{\yvars\mid\zvals}}
\newcommand{\prygZ}{\pr_{\yvars\mid\zvars}}
\newcommand{\qrygZ}{\qr_{\yvars\mid\zvars}}
\newcommand{\gru}{\mathcal{U}}
\newcommand{\mn}{\mathcal{M}}
\newcommand{\mnf}[1]{\mathcal{M}_{#1}=(\gr{#1}, \Phi_{#1})}
\newcommand{\conddiv}{D_{\AB}^{(\alpha,\beta)}(\prygZ\mid\mid\qrygZ)}
\newcommand{\conddivzero}{D_{\AB}^{(0,0)}(\prygZ\mid\mid\qrygZ)}
\newcommand{\sgr}{\Gamma}
\newcommand{\AB}{\textit{AB}}
\newcommand{\ab}{\alpha\beta}
\newcommand{\func}{\mathcal{F}}
\newcommand{\jt}{\mathcal{T}}
\newcommand{\bigO}{\mathcal{O}}
\newcommand{\tw}{\omega}
\newcommand{\twmax}{\omega_\text{max}}
\newcommand{\SP}{\textit{SP}}
\newcommand{\SPQ}{\textit{SPQ}}
\newcommand{\clis}{\bm{\mathcal{C}}}
\newcommand{\seps}{\bm{\mathcal{S}}}
\newcommand{\sep}{\mathcal{S}}
\newcommand{\cli}{\mathcal{C}}
\newcommand{\xvar}{X}
\DeclareMathOperator*{\argmax}{arg\,max}
\newglossaryentry{support}{
  name={support},
  description={Set of elements in $\xdom$, $S\subseteq\xdom$, that have a non-zero probability in the distribution $P$, $\forall \xvals\in S: P(\xvals)>0$}
}
\newglossaryentry{pgmpy}{
  name={\texttt{pgmpy}},
  description={Python library for graphical models~\cite{ankan2015}.}
}
\newglossaryentry{mcgo}{
  name={\texttt{mcgo}},
  description={Method to compute the \gls{kl} divergence between 2 \glspl{bn} of different structures in~\cite{moral2021}.}
}
\newglossaryentry{jt}{
  name={junction tree/forest},
  description={Junction tree/forest of a chordal graph.}
}
\newglossaryentry{spart}{
  name={$\sparts$-partition},
  description={Set of sets of vertices in a chordal graph that partitions both the maximal clique of the graph and some given subset of the vertices of the graph. See \cref{def:s-part}.}
}
\newglossaryentry{bpart}{
  name={$\bparts$-partition},
  description={A set containing sets in an \gls{spart} that has vertices associated with some variable in a given set of variables $\yvars$. See \cref{def:b-part}.}
}
\newglossaryentry{sgr}{
  name={$\sparts$-graph},
  description={Graph obtained from using function $\sgr$ defined in \cref{def:s-part-graph}.}
}
\newglossaryentry{mnp}{
  name={marginal $\sparts$-probability},
  plural={marginal $\sparts$-probabilities},
  description={Marginal probability over the variables of a vertex set from some $\sparts$-partition defined in \cref{def:marg-fac}.}
}
\newabbreviation{pmf}{pmf}{probability mass function}
\newabbreviation{rv}{rv}{random variable}
\newabbreviation{kl}{KL}{Kullback-Leibler}
\newabbreviation[category={graph}]{dg}{DG}{directed graph}
\newabbreviation[category={graph}]{ug}{UG}{undirected graph}
\newabbreviation[category={graph}]{dag}{DAG}{directed acyclic graph}
\newabbreviation{bn}{BN}{Bayesian network}
\newabbreviation{mn}{MN}{Markov network}
\newabbreviation{dm}{DM}{decomposable model}
\newabbreviation{cpt}{CPT}{conditional probability table}
\newabbreviation{jta}{JTA}{junction tree algorithm}
\newabbreviation{comp}{MGASP}{multi-graph aggregated sum-product}
\newabbreviation{generator}{DMDG}{decomposable model drift generator}
\newabbreviation{method}{DMDE}{decomposable model divergence estimator}
\newabbreviation{JTC}{JTComp}{Junction Tree Computation}
\newabbreviation{JFC}{JFComp}{Junction Forest Computation}
\newcommand{\codelink}{\url{https://lklee.dev/pub/2023-icdm/code}}
\crefname{equation}{}{}
\begin{document}

\title{Computing Marginal and Conditional Divergences between
  Decomposable Models with Applications
}

\author{\IEEEauthorblockN{{Loong Kuan
      Lee}\IEEEauthorrefmark{1},
{Geoffrey I. Webb}\IEEEauthorrefmark{2},
{Daniel F. Schmidt}\IEEEauthorrefmark{2}, 
{Nico Piatkowski}\IEEEauthorrefmark{1}}
\IEEEauthorblockA{\IEEEauthorrefmark{1}Fraunhofer IAIS, Schloss
  Birlinghoven, 53757 Sankt Augustin, Germany
}
\IEEEauthorblockA{
  \{loong.kuan.lee, nico.piatkowski\}@iais.fraunhofer.de
}
\IEEEauthorblockA{\IEEEauthorrefmark{2}Department of Data Science and
  AI, Monash University, Melbourne, Australia
}
\IEEEauthorblockA{
  \{geoff.webb, daniel.schmidt\}@monash.edu
}
}


\maketitle
\IEEEpeerreviewmaketitle

\begin{abstract}
  The ability to compute the exact divergence between two
  high-dimensional distributions is useful in many applications but
  doing so naively is intractable. 
  Computing the alpha-beta divergence---a family of divergences that
  includes the Kullback-Leibler divergence and Hellinger
  distance---between the joint distribution of two decomposable
  models,~i.e chordal Markov networks, can be done in time exponential
  in the treewidth of these models. However, reducing the dissimilarity
  between two high-dimensional objects to a single scalar value can be
  uninformative. Furthermore, in applications such as supervised
  learning, the divergence over a conditional distribution might be of
  more interest. Therefore, we propose an approach to compute the exact
  alpha-beta divergence between any marginal or conditional distribution
  of two decomposable models. Doing so tractably is non-trivial as we
  need to decompose the divergence between these distributions and
  therefore, require a decomposition over the marginal and conditional
  distributions of these models. Consequently, we provide such a
  decomposition and also extend existing work to compute the marginal
  and conditional alpha-beta divergence between these decompositions. We
  then show how our method can be used to analyze distributional changes
  by first applying it to a benchmark image dataset. Finally, based on
  our framework, we propose a novel way to quantify the error in
  contemporary superconducting quantum computers.  Code for all
  experiments is available at: \codelink
\end{abstract}


\section{Introduction}
The ability to analyze and quantify the differences between two
high-dimensional distributions has many applications in the fields of
machine learning and data science. For instance, 
in the
study of problems with changing distributions, i.e. concept
drift~\cite{schlimmer1986,webb2018}, in the detection of anomalous
regions in spatio-temporal data~\cite{barz2019,piatkowski2013}, and in
tasks related to the retrieval, classification, and visualization of
time series data~\cite{chen2020}.

There has been some previous work done on tractably computing the
divergence between two high-dimensional distributions modeled
by 
chordal \glspl{mn}, i.e. \glspl{dm}~\cite{lee2023}. However, reducing
the differences between two distributions to just one scalar value can
be quite uninformative as it does not help us better understand why the
distributions are different in the first place. Specifically in
high-dimensions, it is quite likely that the difference between two
distributions will be largely attributable to the influence of a small
proportion of the variables~\cite{webb2018}.

One way to better understand distributional changes in
higher-dimensions is to measure the divergence between the marginal
univariate distributions over individual variables in the two
distributions~\cite{webb2018}. However, this does not capture changes
in the relationships between variables~\cite{hinder2020}. We will
  see further evidence on the importance of measuring changes over
  multiple variables at once
  in
\cref{sec:quantum-single} and \cref{fig:errors}.

Therefore, the goal of this work is to develop a method for computing
the divergence between two \glspl{dm} over arbitrary variable subsets.
Doing so tractably is non-trivial, as computing the $\ab$-divergence
  involves a sum over an exponential number of elements with respect to
  the number of variables in the distributions.
Therefore we need a ``decomposition'' of the sum in the
  $\ab$-divergence into a product of smaller---more
  tractable---sums. As we will see later on, this will require a
  decomposition over any marginal distribution of a \gls{dm} into
  product of smaller marginal distributions as well. An example of
  such a decomposition can be found in \cref{fig:overview}.
%
\begin{figure}
  \centering
  \begin{adjustbox}{width=0.45\textwidth}
    \begin{tikzpicture}[vert/.style={circle, draw=black}]
        \node[vert] (1) at (-1,0.5) {$1$};
        \node[vert] (2) at (-1,-0.5) {$2$};
        \node[vert] (3) at (0,0) {$\cancel{3}$};
        \node[vert] (4) at (1.25,0) {$4$};
        \node[vert] (5) at (2.25,0.5) {$5$};
        \node[vert] (6) at (2.25,-0.5) {$6$};
        \node[] (b) at (3,0) {};
        \node[] (bEq) at (0.75,-1.5) {
            $\begin{gathered}
              \sum_{\xvals_{3}\in\xdom_{3}}
              \pr_{1,2,3,4,5,6}(\xvals_{3})
            \end{gathered}$
        };

        \node[] (a) at (4.5,0) {};
        \node[vert] (1a) at (5,0.5) {$1$};
        \node[vert] (2a) at (5,-0.5) {$2$};
        \node[vert] (4a) at (6,0) {$4$};
        \node[vert] (5a) at (7,0.5) {$5$};
        \node[vert] (6a) at (7,-0.5) {$6$};
        \node[] (aEq) at (6,-1.5) {
            $\begin{gathered}
              \pr_{1,2,4}\cdot\pr_{4,5}\cdot\pr_{4,6}
            \end{gathered}$
        };

        \draw[->, line width=1mm] (b) -- (a);
        \draw (1) -- (3);
        \draw (2) -- (3);
        \draw (4) -- (3);
        \draw (4) -- (5);
        \draw (4) -- (6);

        \draw (1a) -- (4a);
        \draw (2a) -- (4a);
        \draw (2a) -- (1a);
        \draw (4a) -- (5a);
        \draw (4a) -- (6a);
      \end{tikzpicture}       
    \end{adjustbox}\label{fig:n-part-gr}
    \caption{Example of decomposing the marginal distribution of a
      \gls{dm} into a product of maximal clique probabilities of a new
      \gls{dm}. Doing so for the general case is one of our main
      contributions. More details in
      \cref{sec:marg-div}.}\label{fig:overview}
\end{figure}
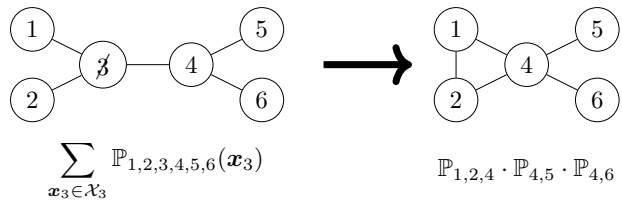

To this end, we will develop the tools needed to find such a
decomposition and also show that these tools can also
be used to develop a method for computing the $\ab$-divergence between
conditional distributions of \glspl{dm} as well. More generally,
  our method can be used to compute any functional that can be
  expressed as a linear combination of the Functional $\func$ in
\cref{def:func}, between any marginal or conditional distributions
  of 2  \glspl{dm}. In fact, it is the properties of $\func$, and
  the $\ab$-divergence being a linear combination of $\func$ for most
  values of $\alpha$ and $\beta$, that allows for the decomposition of
  the $\ab$-divergence over any distributions between two \glspl{dm}.

In order to underpin the relevance of our work, we provide a completely
novel error analysis of real-world quantum computing devices, based on
our methodology. It allows us to gain insights into the specific errors
of (subsets of) qubits, which is of utmost importance for practical
quantum computing.

\section{Related Work}
There is not much previous work on computing the marginal or
conditional divergences between probabilistic graphical models in
general, let alone the marginal and conditional $\ab$-divergence.
That said, the idea of isolating distributional changes to some
subspace of the distribution is not new. For instance,
\cite{hoens2011} employed multiple base learners, each trained over a
random subset of variables. When a change occurs, this approach can
either update the affected learners or reduce how much they will weigh
in on the final output of the ensemble.
In \cite{alberghini2022}, the authors create an ensemble of AESAKNNS
models~\cite{roseberry2021}---a model already capable of handling
distributional change on its own---on different subsets of variables and
samples. Furthermore, unlike \cite{hoens2011}, they vary the size of the
variable subsets for model training as well, and can dynamically change
it over time.

In order to explicitly detect subspaces where distributional changes
occur, \cite{liu2017} compute a statistic between the subspace of two
samples, called the \emph{local drift degree}, which follows the normal
distribution when both samples follow the same distribution. More
recently, \cite{polo2022} propose their own statistic, based on the
Kullback-Leibler divergence, for detecting changes in the distribution
over the covariate and class variables, as well as conditional
distributions between them.

\section{Background and Notation}
For any set of $n$ random variables,
$\xvars=\{\xvar_{1},\ldots,\xvar_{n}\}$, let
$\dom_{\xvars} = \dom_{\xvar_{1}} \times \ldots \times \dom_{\xvar_{n}}$
be the domain of $\xvars$, i.e. the set of values that $\xvars$ can
take, and $\xvals\in\dom_{\xvars}$ be one of these values.
Additionally, for some subset of variables $\zvars\subseteq\xvars$, let
$\bar{\zvars}$ be the set of variables in $\xvars$ not in $\zvars$,
$\bar{\zvars} = \xvars \setminus \zvars$.

\subsection{Statistical Divergences}
A divergence is a measure of the ``difference'' between two probability
distributions. More formally, a divergence is a functional
with the following properties:
\begin{definition}[Divergence]\label{def:divergence}
  Suppose $P$ is the set of probability distributions with the same
  support. A divergence, $D$, is the function
  $D(\cdot\ ||\ \cdot) : P \times P \rightarrow \mathbb{R}$ such that
  $\forall \pr,\qr \in S : D(\pr\ ||\ \qr) \geq 0$ and
  $\pr = \qr \iff D(\pr\ ||\ \qr) = 0$~\footnote{Some authors also
    require that the quadratic part of the Taylor expansion of
    $D(p, p + dp)$ define a Riemannian metric on $P$~\cite{amari2016}.
  }.
\end{definition}
Some popular divergences include the Kullback-Leibler (KL)
divergence~\cite{kullback1951}, the Hellinger
distance~\cite{hellinger1909}, and the Itakura-Saito (IS) divergence
\cite{itakura1968,fevotte2009}. Furthermore, these aforementioned
divergences can be expressed by a more general family of divergences,
known as the $\ab$-divergence.
\begin{definition}[$\alpha\beta$-divergence~\cite{cichocki2010}]
  \label{def:ab-divergence}
  The $\alpha\beta$-divergence, $D_{\AB}$, between two positive measures
  $\pr$ and $\qr$ is defined as
  \begin{gather}
    D_{\AB}^{\alpha,\beta}(\pr,\qr)
    = \sum_{\xvals\in\dom}d_{\AB}^{\alpha,\beta}
    \big(\pr(\xvals),\qr(\xvals)\big)\;,
  \end{gather}
  where $\alpha$ and $\beta$ are parameters, and
  \begin{align}
    &d_{\AB}^{(\alpha,\beta)}(\pr(\xvals), \qr(\xvals)) \label{eq:ab-div-ext}\\
    &=
      \begin{cases}
        \frac{-1
        }{\alpha\beta}\left(
        \pr(\xvals)^{\alpha}\qr(\xvals)^{\beta} -
        \frac{\alpha\pr(\xvals)^{\alpha + \beta} }{\alpha + \beta} -
        \frac{\beta\qr(\xvals)^{\alpha + \beta}}{\alpha + \beta}
        \right),
        & \hfill \mytop{\hfill\alpha,\beta\neq0}{\alpha+\beta\neq0}\\ 
        \frac{1}{\alpha^{2}}\left(
        \pr(\xvals)^{\alpha} \log
        \frac{\pr(\xvals)^{\alpha}}{\qr(\xvals)^{\alpha}} -
          \pr(\xvals)^{\alpha} + \qr(\xvals)^{\alpha}
        \right),& \hfill\mytop{\alpha\neq0}{\beta=0}\\
        \frac{1}{\alpha^{2}} \left(
          \log \frac{\qr(\xvals)^{\alpha}}{\pr(\xvals)^{\alpha}} +
        \left(\frac{\qr(\xvals)^{\alpha}}
        {\pr(\xvals)^{\alpha}}\right)^{-1} - 1
        \right),& \hfill\mytop{\alpha=-\beta}{\alpha\neq0}\\
        \frac{1}{\beta^{2}}\left(
        \qr(\xvals)^{\beta}\log
        \frac{\qr(\xvals)^{\beta}}{\pr(\xvals)^{\beta}} -
          \qr(\xvals)^{\beta} + \pr(\xvals)^{\beta}
        \right), & \hfill\mytop{\alpha=0}{\beta\neq 0}\\
        \frac{1}{2}(\log \pr(\xvals) - \log \qr(\xvals))^{2},
        & \hfill \alpha,\beta=0\;.
      \end{cases}\nonumber
    \end{align}
\end{definition}

To make subsequent developments easier, the $\ab$-divergence, when
either $\alpha$ or $\beta$ is $0$, can be expressed in terms of a linear
combination of the functional $\func$ in \cref{def:func}~\cite{lee2023}.
\begin{definition}[$\func$ Functional]\label{def:func}
  For any function $L$ with the property
  $L\left(\prod_{x}x\right)=\sum_{x}L(x)$, and functionals
  $f \in \{g, h, g^{*}, h^{*}\}$ with the property that
  $f\bigl[\prod_{\zvars \in \mathcal{A}} P_{\zvars}\bigr] =\prod_{\zvars
    \in \mathcal{A}} f\bigl[P_{\zvars}\bigr]$, where
  $\mathcal{A} \subset \mathcal{P}(\xvars)$ and
  $P_{\xvars}=\prod_{\zvars \in \mathcal{A}}P_{\zvars}$, let:
  \begin{align*}
    &\func(\pr,\qr; g, h, g^{*}, h^{*})\\
    &= \sum_{\xvals \in \dom}
    g\big[\pr\big](\xvals) \cdot
    h\big[\qr\big](\xvals) \cdot
    L\Bigl(
    g^{*}[\pr](\xvals) \cdot
    h^{*}[\qr](\xvals)\Bigr)
  \end{align*}
\end{definition}
Therefore, the $\ab$-divergence can be expressed as follows:
\begin{align}
  \begin{aligned}
    &D_{\AB}^{\alpha,\beta}(\pr,\qr)\\
    &=\begin{cases}
        \sum_{\xvals\in\xdom}
        \frac{1}{2}\left(\log \pr(\xvals) - \log \qr(\xvals)\right)^{2}
        ,& \alpha,\beta=0\\
        \sum_{i} c_{i} \func(\pr, \qr ; g_{i},h_{i},g^{*}_{i},h^{*}_{i})
        ,&\text{otherwise}
      \end{cases}
  \end{aligned}\label{eq:ab-div-f-expr}
\end{align}
which is more terse compared to \cref{eq:ab-div-ext}.

So far we have discussed divergences between joint distributions, but
computing these divergences between two marginal distributions,
$\pr_{\zvars}$ and $\qr_{\zvars}$, instead, is similar as we can just
plug these marginal distributions into any divergence directly.



Unfortunately, defining the divergence between two conditional
distributions, $\pr_{\yvars\mid\zvars}$ and $\qr_{\yvars\mid\zvars}$,
is not as straightforward as they are comprised of multiple
distributions over $\yvars$, one of each value of $\zvals\in\zdom$.
There is however a ``natural'' definition for the conditional
KL-divergence as the two natural approaches to define it,
lead to the same result~\cite{bleuler2020}:
\begin{align}
  D(\pr_{\yvars\mid\zvars}\mid\mid\qr_{\yvars\mid\zvars}\mid \pr_{\zvars})
  &:=D(\pr_{\yvars\mid\zvars} \pr_{\zvars}\mid\mid
  \qr_{\yvars\mid\zvars} \pr_{\zvars})\label{eq:kl-cond-1}\\
  &:= \mathbb{E}_{\zvars \sim \pr}
    \left[D(\pr_{\yvars | \zvars} \mid\mid \qr_{\yvars \mid \zvars})\right]
  \label{eq:kl-cond-2}
\end{align}
Unfortunately, this equivalence is unique to the KL-divergence.  For
defining other conditional divergences, previous approaches have either
exclusively taken the definition in
\cref{eq:kl-cond-1}~\cite{sibson1969,cai2019,bleuler2020} or
\cref{eq:kl-cond-2}~\cite{csiszar1995,poczos2012,bhattacharyya2018}. For
our purposes, we will use the definition in \cref{eq:kl-cond-2} which
involves taking the expectation of
$D(\pr_{\yvars\mid\zvals},\qr_{\yvars\mid\zvals})$ with respect to
$\pr_{\zvars}$.
\begin{definition}[Conditional Divergence]
  \label{def:cond-div}
  \begin{align*}
    D(\pr_{\yvars \mid \zvars} \mid\mid \qr_{\yvars \mid \zvars})
    &= \sum_{\zvals\in\zdom} \pr_{\zvars}(\zvals)
      D(\pr_{\yvars | \zvals} \mid\mid \qr_{\yvars \mid \zvals})
  \end{align*}
\end{definition}
\subsection{Graphical Models}
An undirected graph, $\gr{} = (V(\gr{}), E(\gr{}))$, consists of
$n=\abs{V(\gr{})}$ vertices, and edges $(u,v)\in E(\gr{})$. When a
subset of vertices, $\cli\subseteq V(\gr{})$, are fully connected, they
form a clique. Furthermore, let $\clis(\gr{})$ denote the set of all
maximal cliques in $\gr{}$, where a maximal clique is a clique that is
not contained in any other clique.
\subsubsection{Markov Network (MN)}
By associating each vertex in $\gr{}$ to a random variable,
$\xvars= \{\xvar_{v} \mid v \in V(\gr{})\}$, the graph $\gr{}$ will
encode a set of conditional independencies between the variables
$\xvars$~\cite{lauritzen1996}. For some probability distribution $\pr$
over the variables $\xvars$, $\pr$ is strictly positive and follows the
conditional independencies in $\gr{}$ if and only if it is a Gibbs
distribution of the form~\cite{hammersley1971}:
$\pr_{\xvars}(\xvals)=\frac{1}{Z}
\prod_{\cli\in\clis(\gr{})}\psi_{\cli}(\xvals_{\cli})$, where $Z$ is a
normalizing constant,
$Z=\sum_{\xvals\in\xdom}\prod_{\cli\in\clis(\gr{})}
\psi_{\cli}(\xvals_{\cli})$, and $\psi_{\cli}$ are positive factors over
the domain of each maximal clique,
$\psi_{\cli} : \dom_{\cli} \to \mathbb{R}$. We denote such a \gls{mn}
with the notation $\mn=(\gr{},\Psi)$, where
$\Psi=\{\psi_{\cli} \mid \cli\in\clis(\gr{})\}$. Markov networks are
also known as Markov random fields.
\subsubsection{Belief Propagation}
Unfortunately, computing $Z$ directly is
intractable due to the sum growing exponentially with respect to
$\abs{\xvars}$. However, by finding a chordal graph $\grh$ that is
strictly larger than $\gr{}$, we can use an algorithm known as
\emph{belief propagation} on the clique tree of $\grh$ with initial
factors $\Psi$, to compute not only $Z$, but also the marginal
distributions of $\pr$ over each $\cli\in\clis(\grh)$. The complexity of
belief propagation is $\bigO(2^{\tw(\grh)})$, where $\tw$ is the
treewidth of $\grh$, i.e. the size of the largest maximal clique in
$\grh$ minus one.
\subsubsection{Decomposable Model (DM)}
When a \gls{mn} has a chordal graph structure, it also has a closed form
joint distribution:
\begin{equation}
  \label{eq:dm-mle}
  \pr_{\gr{}}(\xvals)
  = \frac{\prod_{\cli\in\clis(\gr{})}
    \pr_{\cli}(\xvals_{\cli})}{\prod_{\sep\in\seps(\gr{})}
    \pr_{\sep}(\xvals_{\sep})}
  =\prod_{\cli\in\clis(\gr{})}\pr^{\jt}_{\cli}(\xvals_{\cli})
\end{equation}
where $\seps(\gr{})$ are the sets of variables between each adjacent
maximal clique in the clique tree of $\gr{}$, and $\pr^{\jt}_{\cli}$ is a
\gls{cpt} over clique $\cli$.

For clarity of exposition, we will only use \gls{dm} to refer to chordal
\glspl{mn} with the \glspl{cpt} in \cref{eq:dm-mle} as factors.
\section{Previous Work: Computing Joint Divergence}
\label{sec:joint-div}
It has been shown~\cite{lee2023} that the complexity of computing the
$\ab$-divergence between decomposable models $\dmp$ and $\dmq$ is
generally, in the worst case, exponential to the treewidth of the
computation graph $\gru(\grP,\grQ)$, as defined in
\cref{def:comp-graph}.
\begin{definition}[Computation graph, $\gru$]\label{def:comp-graph}
  Let $\gr{(1)}$ and $\gr{(2)}$ be two chordal graphs. Then denote
  $\gru(\gr{(1)}, \gr{(2)})$ to be a \emph{chordal} graph that
  contains all the vertices and edges in
  $\gr{(1)}$ and $\gr{(2)}$. We call such a graph, the
  computation graph of $\gr{(1)}$ and $\gr{(2)}$.
\end{definition}

Specifically, when $\alpha,\beta=0$, the joint distributions of $\dmp$
and $\dmq$ immediately decomposes the log term in the $\ab$-divergence
for this case, allowing for a complexity that is exponential to the
maximum treewidth between the 2 \glspl{dm}.

On the other hand, when either $\alpha$ or $\beta$ are non-zero, the
decomposition of the functional $\func$ from \cref{def:func}, and
therefore the $\ab$-divergence in this case, is not as
straightforward.  Substituting the joint distribution of $\dmp$ and
$\dmq$ into $\func$ eventually results in the following equation:
\begin{align}
  \begin{aligned}
    \func(
    {\pr},&{\qr}\;;\;g, h, g^{*}, h^{*})\\
    =& \sum_{\cli\in\clis(\graph_\pr)}\sum_{\xvals_{\cli}\in\dom_{\cli}}
       L\left(g^{*}\left[\pr_{\cli}^{\jt}\right](\xvals_{\cli})\right)
       \SP_{\cli}(\xvals_{\cli}) +\\
    &\sum_{\cli\in\clis(\graph_\qr)}\sum_{\xvals_{\cli}\in\dom_{\cli}}
      L\left(h^{*}\left[\qr_{\cli}^{\jt}\right](\xvals_{\cli})\right)
      \SP_{\cli}(\xvals_{\cli})
  \end{aligned}\label{eq:joint-decomposed}
\end{align}
where:
\begin{align}
  &\SP_{\cli}(\xvals_{\cli}) \label{eq:joint-sp}\\
  &=\sum_{\xvals \in \dom_{\xvar-\cli}}\left[
    \prod_{\cli \in \clis_{\pr}}
    g\left[\pr_{\cli}^{\jt}\right](\xvals_{\cli},\xvals)
    \right]
    \left[
    \prod_{\cli \in \clis_{\qr}}
    h\left[\qr_{\cli}^{\jt}\right](\xvals_{\cli},\xvals)
    \right]\nonumber
\end{align}
Therefore, computing $\func$ between two decomposable models relies on
the ability to obtain the sum-product $\SP_{\cli}$, from
\cref{eq:joint-sp}, for all maximal cliques, $\cli$, in the computation
graph $\gru(\dmp, \dmq)$. These sum-products can then be further
marginalized to obtain sum-products over maximal cliques of $\gr{\pr}$
and $\gr{\qr}$ in \cref{eq:joint-decomposed}.  Specifically,
\cite{lee2023} obtained these sum-products using belief propagation on
the junction tree of $\gru(\dmp,\dmq)$, with factors
$\{g[\pr_{\cli}^{\jt}] \mid \cli \in \clis(\gr{\pr})\} \cup
\{h[\qr_{\cli}^{\jt}] \mid \cli \in \clis(\gr{\qr})\}$.
\section{\glsentrytitlecase{comp}{long}s
  (\glsfmtshort{comp}s)}
\glsunset{comp}

For the purposes of developing the methods in this paper, we will
further abstract and generalize the problem of obtaining $\SP_{\cli}$
for all $\cli$ in $\clis(\grh)$ as a problem of obtaining
the sum-products over two sets of factors, $\fac{(1)}$ and
$\fac{(2)}$, defined over the maximal cliques over two chordal graphs,
$\gr{(1)}$ and $\gr{(2)}$, respectively.
\begin{equation}
  \label{eq:pgasp-sp}
  \forall \cli \in \clis(\grh) : 
  \SP_{\cli}(\xvals_{\cli}) =
  \sum_{\xvals\in\xdom_{\xvars- \cli}}
  \prod_{\phi\in\fac{(1)}\cup\fac{(2)}}
  \phi\left(\xvals_{\cli},\xvals\right)
\end{equation}
We can then further abstract the problem by observing that $\SP_{\cli}$
can be seen as a sum-product over the factors of a new chordal graph
constructed by merging the chordal \glspl{mn} $\mnf{(1)}$ and $\mnf{(2)}$
by taking their product according to \cref{def:mn-merge}.
\begin{definition}[Product of Markov netowrks]\label{def:mn-merge}
  Let $\mnf{(1)}$ and $\mnf{(2)}$ be two \glspl{mn}.  Then, taking the
  product of $\mn_{(1)}$ and $\mn_{(2)}$, $\mn_{(1)}\circ\mn_{(2)}$,
  involves:
  \begin{enumerate}
  \item first obtaining the chordal graph
    $\grh=\gru(\gr{(1)},\gr{(2)})$,
  \item then creating the factor set $\Psi=\fac{(1)}\cup\fac{(2)}$,
  \end{enumerate}
  resulting in a new chordal \gls{mn}, $\mn_{\grh}=(\grh,\Psi)$.
\end{definition}

Once $\mn_{\grh}$ is obtained, like \cite{lee2023} we can then
obtain $\forall \cli\in\clis(\grh) \: : \: \SP_{\cli}$ using belief
propagation on the clique tree of $\mn_{\grh}$ with initial factors
$\Psi$.
%
%
However, in order to aid future developments, we shall further extend
this result to obtain the sum-products over $m$ different factor sets,
and not just two.
\begin{lemma}[\glspl{comp}]\label{thm:mgasp}
  Let $\mnf{(1)}$, $\ldots$ , $\mnf{(m)}$ be $m$ \glspl{mn}, possibly
  defined over different sets of variables.
  Then, using the notion of products between \glspl{mn} from
  \cref{def:mn-merge}, we can obtain the following sum-products:
  \begin{equation}
    \label{eq:mgasp-sp}
    \forall \cli \in \clis(\grh) \: :\:
    \SP_{\cli}(\xvals_{\cli}) =
    \sum_{\xvals\in\xdom_{\xvars - \cli}}
    \prod_{\phi\in\bigcup_{i=1}^{m}\Phi_{(i)}}
    \phi\left(\xvals_{\cli},\xvals\right)
  \end{equation}
  by carrying out belief propagation on the junction tree of
  $\gru\bigl(\gr{(1)},\ldots,\gr{(m)}\bigr)$
  using the set of initial factors $\bigcup_{i=1}^{m}\Phi_{(i)}$.
\end{lemma}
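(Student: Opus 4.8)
The plan is to prove the statement by induction on the number of Markov networks $m$, bootstrapping from the two-network case that is already established via \cref{def:mn-merge} together with the standard correctness guarantee of belief propagation on a chordal graph. The base case $m=1$ is immediate: $\gr{(1)}$ is already chordal, every factor in $\fac{(1)}$ is supported on a maximal clique of $\gr{(1)}$, and belief propagation on the junction tree of $\gr{(1)}$ with initial factors $\fac{(1)}$ returns, for each $\cli\in\clis(\gr{(1)})$, the unnormalized marginal $\sum_{\xvals\in\xdom_{\xvars-\cli}}\prod_{\phi\in\fac{(1)}}\phi(\xvals_\cli,\xvals)$, which is exactly $\SP_\cli$.

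For the inductive step, suppose the claim holds for any $m-1$ networks. I would group the first $m-1$ networks into a single Markov network $\mn'=\mn_{(1)}\circ\cdots\circ\mn_{(m-1)}$, where repeated application of \cref{def:mn-merge} yields $\mn'=(\grh',\bigcup_{i=1}^{m-1}\fac{(i)})$ with $\grh'=\gru(\gr{(1)},\ldots,\gr{(m-1)})$ a chordal graph containing every vertex and edge of $\gr{(1)},\ldots,\gr{(m-1)}$. Then I would take the product of $\mn'$ with $\mn_{(m)}$: by \cref{def:mn-merge}, $\mn'\circ\mn_{(m)}=(\grh,\Psi)$ where $\grh=\gru(\grh',\gr{(m)})$ and $\Psi=\bigl(\bigcup_{i=1}^{m-1}\fac{(i)}\bigr)\cup\fac{(m)}=\bigcup_{i=1}^{m}\fac{(i)}$. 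Since $\gru(\grh',\gr{(m)})$ is chordal and contains all vertices and edges of $\gr{(1)},\ldots,\gr{(m)}$, it is itself an admissible choice of computation graph $\gru(\gr{(1)},\ldots,\gr{(m)})$ in the sense of \cref{def:comp-graph}. Applying the two-network result to $\mn'$ and $\mn_{(m)}$ then yields $\SP_\cli$ for every $\cli\in\clis(\grh)$ as the sum-product over the combined factor set $\Psi$, which is precisely \cref{eq:mgasp-sp}.

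The facts that require care are mostly bookkeeping. First, each factor $\phi\in\fac{(i)}$, being supported on a maximal clique of $\gr{(i)}$, remains supported on a \emph{clique} of the supergraph $\grh$, because $\grh$ contains all edges of $\gr{(i)}$; hence its scope stays fully connected and can be assigned to some maximal clique of $\grh$ that contains it, so $\bigcup_{i=1}^{m}\fac{(i)}$ is a valid initial factor set for belief propagation on the junction tree of $\grh$. Second, the networks may be defined over different variable sets, so that $V(\grh)=\bigcup_{i}V(\gr{(i)})$ and any factor is simply constant in the variables outside its scope, which keeps the marginalization in \cref{eq:mgasp-sp} well defined. I expect the only genuinely load-bearing point to be the observation that nesting the computation-graph operator, $\gru(\gru(\cdots),\gr{(m)})$, always produces an admissible $\gru(\gr{(1)},\ldots,\gr{(m)})$; once this associativity of the union-and-triangulate operation is noted, the remainder is inherited directly from the $m=2$ case, namely the standard guarantee that belief propagation on a junction tree computes the unnormalized clique marginals of the product of its initial factors.
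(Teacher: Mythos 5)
Your proposal is correct and takes essentially the same approach as the paper: merge the $m$ networks via iterated application of \cref{def:mn-merge} into a single chordal \gls{mn} whose factor set is $\bigcup_{i=1}^{m}\Phi_{(i)}$, then invoke the standard correctness guarantee of belief propagation on a junction tree (the paper cites \cite[Corollary~10.2]{koller2009}). One remark: your induction is only scaffolding---the inductive hypothesis for $m-1$ networks is never actually invoked, since all the work is done by the single application of the two-network/BP-correctness result to the merged network, which is exactly the paper's one-step argument.
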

\begin{proof}
  Using \cref{def:mn-merge}, we can first take the product of the $m$
  given \glspl{mn} to obtain the chordal \gls{mn}
  $\mn_{\grh}=(\grh,\Psi)$:
  $\mn_{\grh}=\mn_{(1)}\circ \ldots \circ \mn_{(m)}$ and
  $\Psi = \bigcup_{i=1}^{m}\Phi_{(i)}$. Then, recall that carrying out
  belief propagation on the junction tree of $\mn_{\grh}$ will result in
  the following beliefs over the maximal cliques of
  $\grh$~\cite[Corollary~10.2]{koller2009}:
  \begin{equation}
    \label{eq:bp-koller}
    \forall \cli\in\clis(\grh) \: : \:
    \beta_{\cli}(\xvals_{\cli})
    = \sum_{\xvals\in\xdom_{\xvars - \cli}}
    \prod_{\psi\in\Psi} \psi(\xvals,\xvals_{\cli})
  \end{equation}
  which is directly equivalent to the sum-products in \cref{eq:mgasp-sp}
  since $\Psi = \bigcup_{i=1}^{m}\Phi_{(i)}$.
\end{proof}

\section{Marginal Divergence}\label{sec:marg-div}
The marginal distribution over the set of variables $\zvars$ for a
given decomposable model $\dmp$ can be obtained by summing over the
domain of the variables not in $\zvars$,
$\bar{\zvars}=\xvars\setminus\zvars$.
\begin{equation}
  \label{eq:marg-dist}
  \pr_{\zvars}(\zvals)
  = \sum_{\bar{\zvals}\in\bar{\zdom}} \pr_{\xvals}(\bar{\zvals},\zvals)
  = \sum_{\bar{\zvals}\in\bar{\zdom}} \prod_{\cli\in\clis(\gr{\pr})}
  \prjt_{\cli}(\bar{\zvals},\zvals)
\end{equation}
However, in order to decompose the $\ab$-divergence between marginal
distributions of two \glspl{dm}, we need to find a factorization of
the sum over $\bar{\zdom}$ in these marginal distributions.
\glsreset{dm}
\subsection{Factorizing the marginal distribution
  of a \gls{dm}}\label{sec:marg-factorise}
Factorizing the marginal distribution in \cref{eq:marg-dist} will
require grouping together \glspl{cpt}, and
therefore maximal cliques of $\dmp$, that share any
variables in the set of variables to sum out, $\bar{\zvars}$. We shall call
such a grouping of maximal cliques in $\grP$, an $\sparts$-partition of
$\grP$. See \cref{fig:n-part-gr} for an example of such an
$\sparts$-partition.
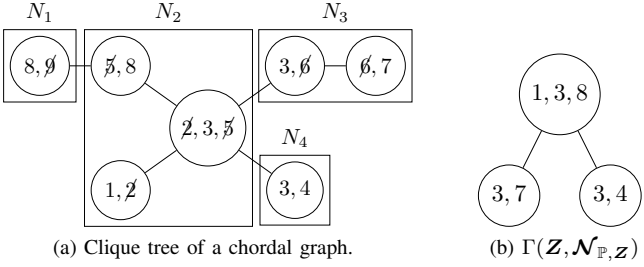
\begin{figure}
  \centering
  \subfloat[Clique tree of a chordal graph.]{  
    \centering \captionsetup{justification=centering}
    \begin{adjustbox}{width=0.3\textwidth}
      \begin{tikzpicture}[vert/.style={circle, draw=black}]
        \node[vert] (235) at (0,0) {$\cancel{2},3,\cancel{5}$};

        \node[vert] (12) at (-1.4,-1) {$1,\cancel{2}$};
        \node[vert] (34) at (1.4,-1) {$3,4$};
        \draw (235) -- (12);
        \draw (235) -- (34);

        \node[vert] (58) at (-1.4,1) {$\cancel{5},8$};
        \node[vert] (36) at (1.4,1) {$3,\cancel{6}$};
        \draw (235) -- (58);
        \draw (235) -- (36);

        \node[vert] (89) at (-2.7,1) {$8,\cancel{9}$};
        \node[vert] (67) at (2.7,1) {$\cancel{6},7$};
        \draw (58) -- (89);
        \draw (36) -- (67);

        \node[draw, inner sep=1mm,fit=(89) (89) (89) (89),
              label=above:{$\spart_{1}$}](n1){};
        \node[draw, inner sep=1mm,fit=(12) (235) (58) (12),
              label=above:{$\spart_{2}$}](n2){}; 
        \node[draw, inner sep=1mm,fit=(36) (36) (67) (36),
              label=above:{$\spart_{3}$}](n3){}; 
        \node[draw, inner sep=1mm,fit=(34) (34) (34) (34),
              label=above:{$\spart_{4}$}](n4){}; 
      \end{tikzpicture}       
    \end{adjustbox}\label{fig:n-part-gr}}
  \hfill
  \subfloat[$\sgr(\zvars,\sparts_{\pr,\zvars})$]{
    \centering \captionsetup{justification=centering}
    \begin{adjustbox}{width=0.12\textwidth}
      \begin{tikzpicture}[vert/.style={circle, draw=black}]
        \node[vert] (138) at (0,0) {$1,3,8$};
        \node[vert] (34) at (0.75,-1.5) {$3,4$};
        \node[vert] (37) at (-0.75,-1.5) {$3,7$};
        \draw (138) -- (34);
        \draw (138) -- (37);
      \end{tikzpicture}
    \end{adjustbox}    \label{fig:n-graph}
  }  
  \caption{An $\sparts$-partition over a clique tree for removing the
    variables striked-out, and its respective
    $\sparts$-graph.}\label{fig:n-part}
\end{figure}
\begin{definition}[$\sparts$-partition]\label{def:n-part}
  For any chordal graph $\grP$ and marginal variables $\zvars$, with
  $\bar{\zvars}=\xvars\setminus\zvars$, an $\sparts$-partition of $\grP$,
  $\sparts_{\pr,\zvars}$, is a set of vertex sets,
  $\forall \spart\in\sparts_{\pr,\zvars} : \spart\subset V(\grP)$,
  with the following properties:
  \begin{enumerate}
  \item
    $\{\clis(\grP(\spart)) \mid \spart \in
    \sparts_{\pr,\zvars}\}$ is a partition of the maximal cliques
    of $\grP$,
  \item
    $\{\bar{\zvars} \cap \xvars_{\spart} \mid \spart \in
    \sparts_{\pr,\zvars}, \bar{\zvars} \cap \xvars_{\spart} \neq
    \emptyset\}$ is a partition of $\bar{\zvars}$,
  \end{enumerate}
  One way to find an $\sparts$-partition of $\grP$ is to group
  maximal cliques of $\grP$ that share any variables that are also in
  $\bar{\zvars}$.
\end{definition}

\begin{corollary}[Decomposition of Marginal Distribution]
  Using $\sparts$-partitions from \cref{def:n-part}, the marginal
  distribution over variables $\zvars$ of a decomposable model $\dmp$
  can be factorized over the variables sets in $\sparts_{\pr,\zvars}$:
\begin{gather}
  \label{eq:marg-dist-decomp}
  \begin{aligned}
    \pr_{\zvars}(\zvals) &= \prod_{\spart\in\sparts_{\pr,\zvars}}
    \sum_{\bar{\zvals}\in\bar{\zdom}_{\spart}} \prod_{\cli\in\clis(\grP(\spart))}
    \prjt_{\cli}(\zvals_{\cli},\bar{\zvals}_{\cli})\\
    &=\prod_{\spart\in\sparts_{\pr,\zvars}}
    \partfac_{\spart\cap\zvars}[\pr](\zvals_{\spart})
  \end{aligned}
\end{gather}
where for ease of notation, we use $\partfac_{\sparts\cap{\zvars}}$ as
shorthand for marginalizing factors in $\spart$ to remove any variables
in $\bar{\zvars}$. See \cref{def:part-fac} for a full definition of
$\partfac_{\sparts\cap{\zvars}}$.
\end{corollary}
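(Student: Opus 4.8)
The plan is to prove the claim by a purely algebraic manipulation of the marginalization sum in \cref{eq:marg-dist}, exploiting the two defining properties of an $\sparts$-partition (\cref{def:n-part}) to convert a single sum of a product into a product of sums. No probabilistic reasoning is needed beyond the closed-form clique factorization of \cref{eq:dm-mle}; the argument is the generalized distributive law applied to the scope structure induced by the partition.

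First I would start from $\pr_{\zvars}(\zvals) = \sum_{\bar{\zvals}\in\bar{\zdom}} \prod_{\cli\in\clis(\grP)} \prjt_{\cli}(\bar{\zvals}_{\cli},\zvals_{\cli})$ in \cref{eq:marg-dist}. By the first property of \cref{def:n-part}, the family $\{\clis(\grP(\spart))\}_{\spart\in\sparts_{\pr,\zvars}}$ is a partition of $\clis(\grP)$, so every maximal clique lies in exactly one block and I may regroup the product over cliques as a nested product, first over parts and then over cliques within each part:
\[
\prod_{\cli\in\clis(\grP)} \prjt_{\cli} = \prod_{\spart\in\sparts_{\pr,\zvars}} \prod_{\cli\in\clis(\grP(\spart))} \prjt_{\cli}.
\]
The crux is then the following observation. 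For a fixed part $\spart$, the inner product $\prod_{\cli\in\clis(\grP(\spart))} \prjt_{\cli}$ involves only variables associated with vertices in $\spart$, since each $\cli\in\clis(\grP(\spart))$ has $\xvars_{\cli}\subseteq\xvars_{\spart}$; in particular, the only summed-out variables it touches are those in $\bar{\zvars}\cap\xvars_{\spart}$. By the second property of \cref{def:n-part}, the blocks $\bar{\zvars}\cap\xvars_{\spart}$ are pairwise disjoint and cover $\bar{\zvars}$, so summing over $\bar{\zdom}$ is the same as summing independently over each $\bar{\zdom}_{\spart}$, with each per-part factor depending on the summation variables of its own block alone. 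The distributive law then lets me pull each block-sum past the factors that do not depend on it:
\[
\sum_{\bar{\zvals}\in\bar{\zdom}} \prod_{\spart\in\sparts_{\pr,\zvars}} \prod_{\cli\in\clis(\grP(\spart))} \prjt_{\cli}(\zvals_{\cli},\bar{\zvals}_{\cli}) = \prod_{\spart\in\sparts_{\pr,\zvars}} \sum_{\bar{\zvals}\in\bar{\zdom}_{\spart}} \prod_{\cli\in\clis(\grP(\spart))} \prjt_{\cli}(\zvals_{\cli},\bar{\zvals}_{\cli}),
\]
which is exactly the first line of \cref{eq:marg-dist-decomp}. The second equality is then immediate by reading off the definition of the per-part marginalized factor $\partfac_{\spart\cap\zvars}[\pr]$ from \cref{def:part-fac}.

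I expect the main obstacle to be rigorously justifying the variable-locality claim that underlies the distributive step: that the factor attached to part $\spart$ depends on $\bar{\zvars}$ solely through the block $\bar{\zvars}\cap\xvars_{\spart}$, and that no summed-out variable is shared across two parts. This rests entirely on property~2 of \cref{def:n-part} (disjointness and coverage of the $\bar{\zvars}$-blocks) together with the containment $\xvars_{\cli}\subseteq\xvars_{\spart}$ for $\cli\in\clis(\grP(\spart))$. Kept variables in $\zvars$ may legitimately be shared across parts (for instance, separator variables), but since they are held fixed at $\zvals$ throughout the marginalization, this sharing does not obstruct the factorization of the sum; only the summed-out variables must be separated, and property~2 guarantees precisely that.
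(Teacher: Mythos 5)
Your proof is correct and is precisely the argument the paper leaves implicit: the statement is presented as a corollary of \cref{def:n-part} with no written proof, and your two steps---regrouping the clique product via property~1 and splitting the sum over $\bar{\zdom}$ via property~2 and the generalized distributive law---are exactly the omitted verification, including the correct observation that shared $\zvars$-variables (held fixed at $\zvals$) do not obstruct the factorization. Nothing is missing; the final equality is, as you say, just \cref{def:part-fac} read off termwise.
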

\begin{definition}[$(\spart\cap\zvars)$-Marginalised Factor,
  $\partfac_{\spart\cap\zvars}$]\label{def:part-fac}
  Through a slight abuse of notation, let $\spart\cap\zvars := \spart \cap V(\zvars)$.
  Then we define the $(\spart\cap\zvars)$-marginalized factor as
  such:
  \begin{equation}
    \label{eq:part-fac}
    \partfac_{\spart\cap\zvars}[\pr](\zvals_{\spart}) :=
    \sum_{\bar{\zvals}\in\bar{\zdom}_{\spart}} \prod_{\cli\in\clis(\grP(\spart))}
    \prjt_{\cli}(\zvals_{\cli},\bar{\zvals}_{\cli})
  \end{equation}
  Note that it is possible for $\partfac_{\spart\cap\zvars}$ to be
  defined over the empty set when
  $\spart\cap\zvars=\{\}\iff \xvars_{\spart}\subseteq\bar{\zvars}$. In
  such situations, $\partfac_{\spart\cap\zvars}$ is just a scalar
  factor,~i.e.\ a number in $\mathbb{R}$. For ease of exposition, we
  shall also define a function $\partfacs$ that returns a set of
  $(\spart\cap\zvars)$-marginalized factors given an $\sparts$-partition
  $\sparts_{\pr,\zvars}$:
  \begin{equation}
    \partfacs(\sparts_{\pr,\zvars}) :=
    \Bigl\{
    \partfac_{\spart\cap\zvars}[\pr] \Bigm|
    \spart \in \sparts_{\pr,\zvars}
    \Bigr\}
    \label{eq:def-part-facs}
  \end{equation}
\end{definition}

To summarize, we finally now have factorization of $\pr_{\zvars}$ in
terms of a product of factors defined over each set in the partition
$\sparts_{\pr,\zvars}$.
\begin{align}
  \pr_{\zvars}(\zvals)
  &=\prod_{\spart\in\sparts_{\pr,\zvars}}
    \partfac_{\spart\cap\zvars}[\pr](\zvals_{\spart})
\end{align}
These set of factors imply an new graph structure where an edge exists
between any two vertices in $\zvars$ if there is any set in
$\sparts_{\pr,\zvars}$ that contains both vertices together. We will
call these new graphs based on the sets in an $\sparts$-partition,
$\sparts$-graphs.  See \cref{fig:n-part-gr,fig:n-graph} for an example
of an $\sparts$-partition and its respective $\sparts$-graph.
\begin{definition}[Sets-to-Cliques graph constuctor, $\settocli(\sparts)$]
  \label{def:sets-to-clis}
  Let $\sparts$ be a set of vertex-sets. Then $\settocli$ is a graph
  constructor that creates a graph $\settocligr$ with the following
  vertices and edges:
  \begin{gather*}
    V(\settocligr) = \bigcup_{\spart\in\sparts} \spart\\
    E(\settocligr) = \bigl\{
    (u,v) \bigm| \spart\in\sparts , (v, u) \in \spart , v \neq u
    \bigr\}
  \end{gather*}
  Therefore, the graph constructor $\settocli(\sparts)$ ensures that
  each vertex-set in $\sparts$ is also as clique in the resulting
  graph $\settocligr$.
\end{definition}
\begin{definition}[$\sparts$-graphs,
  $\sgr(\sparts_{\pr,\zvars})$]\label{def:s-part-graph}
  Let $\grP$ be a chordal graph with an $\sparts$-partition
  $\sparts_{\pr,\zvars}$. Then $\sgr(\sparts_{\pr,\zvars})$ is a
  function that will take the graph constructed by
  $\settocli(\sparts_{\pr,\zvars})$ from \cref{def:sets-to-clis} and
  remove any vertex or edges associated with any variables in
  $\bar{\zvars}$.
\end{definition}
\begin{theorem}[The $\sparts$-graph created by $\sgr$ is a chordal
  graph]\label{thm:n-graph-chordal}
  Let $\grP$ be a chordal graph and $\zvars\subset\xvars$. Then the
  $\sparts$-graph, $\sgr(\sparts_{\pr,\zvars})$, is a chordal graph.
\end{theorem}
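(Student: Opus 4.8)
The plan is to establish chordality by building an explicit clique tree for the $\sparts$-graph out of a clique tree of $\grP$, and then invoking the fact that induced subgraphs of chordal graphs are chordal. Recall that $\sgr(\sparts_{\pr,\zvars})$ is obtained by first making every set $\spart\in\sparts_{\pr,\zvars}$ a clique via $\settocli$ (\cref{def:sets-to-clis}) and then deleting the vertices, and incident edges, associated with $\bar{\zvars}$ (\cref{def:s-part-graph}). Since deleting $\bar{\zvars}$ merely takes the induced subgraph on $\zvars$, it suffices to prove that the intermediate graph $\settocligr := \settocli(\sparts_{\pr,\zvars})$ on the full vertex set is chordal; its induced subgraph $\sgr(\sparts_{\pr,\zvars})$ is then chordal automatically.

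First I would fix a clique tree $\tree$ of $\grP$, which exists because $\grP$ is chordal, and argue that each part $\spart$ induces a connected subtree of $\tree$. Using the grouping described in \cref{def:n-part}---a part is a union of maximal cliques of $\grP$ linked through shared variables of $\bar{\zvars}$---this reduces to the running-intersection property: if two maximal cliques both contain some $v\in\bar{\zvars}$, then every clique on the $\tree$-path between them also contains $v$ and hence belongs to the same part. Consequently the maximal cliques of each part form a connected region of $\tree$, and since the parts partition the maximal cliques of $\grP$ (property~1 of \cref{def:n-part}), these regions partition the nodes of $\tree$.

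Next I would contract each such connected subtree to a single node, obtaining a tree $\tree'$ whose nodes are the parts $\spart$ and whose bag for node $\spart$ is the vertex set $\spart$ itself. I claim $(\tree',\{\spart\})$ is a clique tree of $\settocligr$: every bag $\spart$ is a clique of $\settocligr$ by construction of $\settocli$; every edge of $\settocligr$ lies inside some bag, since the original edges of $\grP$ sit inside a maximal clique and hence inside its part, while the fill-in edges added by $\settocli$ lie inside a single $\spart$; and the running-intersection property is inherited, because contracting connected subtrees of $\tree$ preserves, for each vertex $w$, the connectedness of the nodes whose bag contains $w$. A graph admitting such a clique tree---equivalently, a tree decomposition all of whose bags are cliques---is chordal, so $\settocligr$ is chordal and therefore so is $\sgr(\sparts_{\pr,\zvars})$.

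The main obstacle is the connectivity claim of the second step: everything downstream is routine bookkeeping with tree decompositions, but the argument collapses if some part fails to induce a connected subtree of $\tree$. I expect the cleanest justification to run through the running-intersection property of the clique tree of $\grP$ together with the grouping rule of \cref{def:n-part}. One should also check the degenerate cases: parts consisting only of $\bar{\zvars}$-free cliques form singleton groups and are trivially connected subtrees, while parts with $\spart\cap\zvars=\emptyset$ simply contribute empty bags, which do not affect chordality of the induced subgraph.
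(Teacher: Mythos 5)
Your proof is correct and follows essentially the same route as the paper's: show that the intermediate graph $\settocli(\sparts_{\pr,\zvars})$ is chordal because each part occupies a connected region of the clique tree of $\grP$ (so merging parts yields another clique tree), then observe that deleting $\bar{\zvars}$ is an induced-subgraph operation, which preserves chordality. The only difference is one of rigor: the paper merely asserts that only adjacent cliques can share a part and that merging them gives a clique tree, whereas you justify these claims explicitly via the running-intersection property and contraction of connected subtrees.
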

\begin{proof}
  Constructing $\sgr(\sparts_{\pr,\zvars})$ involves two steps: 1)
  merging maximal cliques in the same partition via
  $\settocli(\sparts_{\pr,\zvars})$, and 2) removing any vertices and
  edges associated with $\bar{\zvars}$. We will show that the resulting
  graph after both steps are chordal.
  \begin{itemize}
  \item Only adjacent maximal cliques in the clique tree of $\grP$ can
    be in the same partition in $\sparts_{\pr,\zvars}$. Merging adjacent
    maximal cliques in a clique tree will result in yet another clique
    tree, implying the graph $\settocli(\sparts_{\pr,\zvars})$ is also
    chordal.
  \item The deletion of all vertices and edges associated with
    $\bar{\zvars}$ is equivalent to the induced subgraph over the
    vertices $V(\zvars)$. But any induced subgraph of a chordal graph is
    also a chordal graph~\cite{blair1993}.
  \end{itemize}
  Therefore, $\sgr(\sparts_{\pr,\zvars})$ is a chordal graph.
\end{proof}

In conclusion, the factorization of $\pr_{\zvars}$ essentially creates a
new chordal Markov network consisting of the graph
$\sgr(\sparts_{\pr,\zvars})$ and factors
$\partfacs(\sparts_{\pr,\zvars})$,
$\mn_{\pr,\zvars}=\bigl(\sgr(\sparts_{\pr,\zvars}),\partfacs(\sparts_{\pr,\zvars})\bigr)$.
\subsection{Computing the Marginal
  \texorpdfstring{$\ab$}{ab}-Divergence between
  \Glspl{dm}}
With this factorization of the marginal distribution of a \gls{dm}, and
even a graphical representation of these factorizations, the problem of
computing the $\ab$-divergence between the marginal distributions of two
\glspl{dm} becomes surprisingly straightforward as a direct equivalence
can made to the problem of computing the joint divergence between two
\glspl{dm}.

Recall from \cref{sec:joint-div} that when \cite{lee2023} tackled the
problem of computing the joint divergence between $\dmp$ and $\dmq$,
they represented the joint distribution of a \gls{dm} as a
set of \glspl{cpt} defined over the maximal cliques of a chordal
graph. These \glspl{cpt} are fundamentally just ordinary factors over
their respective variables, and \cite{lee2023} did not rely on any
special property of these factors to compute the $\ab$-divergence
between these representations of \glspl{dm}.

Thanks to the factorization from \cref{sec:marg-factorise}, we have a
similar problem setup to computing the joint divergence. Specifically,
problem of computing the $\ab$-divergence between the marginal
distributions $\pr_{\zvars}$ and $\qr_{\zvars}$, of \glspl{dm}
$\dmp$ and $\dmq$ respectively, becomes a problem of computing the
$\ab$-divergence between the chordal \glspl{mn}:
$\mn_{\pr,\zvars}=\bigl(\sgr(\sparts_{\pr,\zvars}),\partfacs(\sparts_{\pr,\zvars})\bigr)$
and
$\mn_{\qr,\zvars}=\bigl(\sgr(\sparts_{\qr,\zvars}),\partfacs(\sparts_{\qr,\zvars})\bigr)$.

Therefore, the complexity of computing the $\ab$-divergence between
factorizations of $\pr_{\zvars}$ and $\qr_{\zvars}$ is just the
complexity of the approach in \cite{lee2023}. However, that does not
include the complexity of obtaining the factorization of $\pr_{\zvars}$
and $\qr_{\zvars}$.
\begin{theorem}\label{thm:n-part-cmplx}
  The worst case complexity of obtaining the factorization from
  \cref{sec:marg-factorise} of $\pr_{\zvars}$ for some \gls{dm} $\dmp$ is:
  \begin{equation}
    \label{eq:n-part-cmplx}
    \mathcal{O}\left(
      |\xvars| \cdot 2^{|\spart^{*}|}\right)
  \end{equation}
  where
  $ \spart^{*} := \argmax_{\spart\in\sparts_{\pr,\zvars}} |\spart|$.
\end{theorem}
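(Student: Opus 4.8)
The plan is to split the work of obtaining the factorization into two stages and show that the second dominates. The first stage is the construction of the $\sparts$-partition $\sparts_{\pr,\zvars}$; the second is the evaluation of each $(\spart\cap\zvars)$-marginalised factor $\partfac_{\spart\cap\zvars}[\pr]$ from \cref{def:part-fac}. For the first stage I would use the grouping rule suggested in \cref{def:n-part}: traverse the clique tree of $\grP$ and merge any two adjacent maximal cliques whose separator contains a variable of $\bar{\zvars}$. Since a chordal graph on $|\xvars|$ vertices has at most $|\xvars|$ maximal cliques~\cite{blair1993}, and hence a clique tree with at most $|\xvars|-1$ edges, this is a connected-components computation that is polynomial in $|\xvars|$ and is dominated by the exponential cost of the second stage.

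For the second stage, fix a partition element $\spart\in\sparts_{\pr,\zvars}$. Evaluating $\partfac_{\spart\cap\zvars}[\pr]$ requires forming the product $\prod_{\cli\in\clis(\grP(\spart))}\prjt_{\cli}$ over the domain of $\spart$ and then summing out the variables of $\bar{\zvars}$ that appear in $\spart$. The intermediate product is a factor over the $|\spart|$ variables of $\spart$, so it has at most $2^{|\spart|}$ entries; building it by multiplying in the $|\clis(\grP(\spart))|$ clique CPTs costs $\bigO(|\clis(\grP(\spart))|\cdot 2^{|\spart|})$, and the subsequent marginalisation is a further $\bigO(2^{|\spart|})$. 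Thus the per-element cost is $\bigO(|\clis(\grP(\spart))|\cdot 2^{|\spart|})$.

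Summing over all partition elements and bounding $2^{|\spart|}\le 2^{|\spart^{*}|}$ gives a total of $\bigO\bigl(2^{|\spart^{*}|}\cdot\sum_{\spart\in\sparts_{\pr,\zvars}}|\clis(\grP(\spart))|\bigr)$. The decisive step is that, by property~1 of \cref{def:n-part}, the sets $\clis(\grP(\spart))$ form a partition of $\clis(\grP)$, so their sizes sum exactly to the number of maximal cliques of $\grP$, namely $\sum_{\spart}|\clis(\grP(\spart))| = |\clis(\grP)| \le |\xvars|$. Substituting yields the claimed bound $\bigO(|\xvars|\cdot 2^{|\spart^{*}|})$.

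The main obstacle is this last accounting step: one must resist the loose estimate of (number of partitions) times (maximum number of cliques per partition) times $2^{|\spart^{*}|}$, which would overcount by a factor of $|\xvars|$. Instead, I would emphasise that the clique sets $\clis(\grP(\spart))$ are pairwise disjoint and exhaust $\clis(\grP)$, so their combined count is controlled directly by the chordal-graph bound $|\clis(\grP)|\le|\xvars|$. It is precisely the partitioning property of \cref{def:n-part}, combined with this clique bound, that collapses what looks like a quadratic blow-up into the single linear factor of $|\xvars|$ in \cref{eq:n-part-cmplx}.
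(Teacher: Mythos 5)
Your proof is correct and arrives at the paper's bound, but your accounting is a genuine refinement of what the paper does. The paper's own proof is the coarse product bound: it charges each partition element $\spart$ a cost of $2^{|\spart|}$ (treating evaluation of $\partfac_{\spart\cap\zvars}[\pr]$ as a plain iteration over $\xdom_{\spart}$, with the per-entry product of clique CPTs absorbed as constant work), bounds this by $2^{|\spart^{*}|}$, and multiplies by the number of partition elements, $|\sparts_{\pr,\zvars}|\le|\xvars|$. You instead make the CPT-multiplication cost explicit, so your per-element cost is $\bigO\bigl(|\clis(\grP(\spart))|\cdot 2^{|\spart|}\bigr)$, and you then collapse $\sum_{\spart\in\sparts_{\pr,\zvars}}|\clis(\grP(\spart))|$ to $|\clis(\grP)|\le|\xvars|$ using property~1 of \cref{def:n-part} together with the chordal-graph bound on the number of maximal cliques. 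That last step is precisely what the paper's ``elements times max per-element cost'' scheme cannot deliver under your more honest cost model: keeping the clique-count factor, it would give $|\xvars|\cdot\max_{\spart}|\clis(\grP(\spart))|\cdot 2^{|\spart^{*}|}$, which is quadratic in $|\xvars|$ in the worst case, so the partition-sum argument is what rescues the single linear factor in \cref{eq:n-part-cmplx}. You additionally account for the cost of constructing $\sparts_{\pr,\zvars}$ itself (a connected-components pass over the clique tree), which the paper's proof omits entirely. In short, both proofs share the same skeleton---exponential per-element cost bounded by the largest element, times a linear factor---but your derivation of the linear factor is tighter and covers two costs the paper glosses over; what the paper's version buys in exchange is brevity, at the price of implicitly assuming the product of CPTs can be evaluated in constant time per entry.
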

\begin{proof}
  For any $\spart\in\sparts_{\pr,\zvars}$, obtaining the marginal factor
  $\partfac_{\spart\cap\zvars}[\pr]$ requires us to essentially iterate
  through each value $\xvals\in\xdom_{\spart}$ in the worst case,
  resulting in the complexity of $2^{|\spart|}$. This complexity is
  bounded by the complexity for the largest partition in
  $\sparts_{\pr,\zvars}$,
  $\spart^{*}:=\argmax_{\spart\in\sparts_{\pr,\zvars}}
  \abs{\spart}$. Since we need to obtain
  $\partfac_{\spart\cap\zvars}[\pr]$ for all
  $\spart\in\sparts_{\pr,\zvars}$, and $|\sparts_{\pr,\zvars}|$ in
  bounded by $|\xvars|$, the final complexity is
  $\mathcal{O}\left(|\xvars| \cdot 2^{|\spart^{*}|}\right)$.
\end{proof}

Therefore, the complexity of 
$D_{AB}^{(\alpha,\beta)}(\pr_{\zvars}||\qr_{\zvars})$ is:
\begin{align}
  \label{eq:comp-marg-cmplx}
  &D_{AB}^{(\alpha,\beta)}(\pr_{\zvars}||\qr_{\zvars})\\
  &\in
  \mathcal{O}\left(
    \abs{\xvars} \cdot 2^{|\spart^{*}|}\right)\cdot
  \begin{cases}
    \mathcal{O}(|\xvars|^{2} \cdot \twmax
    2^{\twmax+1})
    & \alpha,\beta=0\\
    \mathcal{O}(|\xvars|\cdot 2^{\tw(\grh)+1})
    & \text{otherwise}
  \end{cases}\nonumber
\end{align}
where
$\twmax= \max(\tw(\sgr(\sparts_{\pr,\zvars})),
\tw(\sgr(\sparts_{\pr,\zvars})))$ and $\grh=\gru(\sgr(\sparts_{\pr,\zvars}),\sgr(\sparts_{\pr,\zvars}))$.

\section{Conditional Divergence}
We will now show that having a factorization of any marginal
distribution for a \gls{dm} allows us to compute the conditional
$\ab$-divergence between two \glspl{dm}.
Recall from \cref{def:cond-div} the definition of the conditional
$\ab$-divergence we will use, where $\zvars\cup\yvars\subseteq\xvars$,
and $\zvars\cap\yvars=\{\}$:
\begin{gather*}
  D_{\AB}^{(\alpha,\beta)}(
  \pr_{\yvars \mid \zvars} \mid\mid \qr_{\yvars \mid \zvars})
  =\sum_{\zvals\in\zdom} \pr_{\zvars}(\zvals)
    D_{\AB}^{(\alpha,\beta)}(\prygz\mid\mid\qrygz)\nonumber
  \intertext{where}
  \begin{aligned}
    &D_{\AB}^{(\alpha,\beta)}(\prygz\mid\mid\qrygz)\\
    &=
    \begin{cases}
      \sum_{\yvals\in\ydom} \frac{1}{2}\left(\log \prygz(\yvals) -
      \log \qrygz(\yvals)\right)^{2}
      & \alpha,\beta=0\\
      \sum_{i} c_{i} \func(\prygz, \qrygz ;
      g_{i},h_{i},g^{*}_{i},h^{*}_{i})
      &\text{else}
    \end{cases}
  \end{aligned}
\end{gather*}
To assist in future developments, we shall define a notion of taking the
quotient between two \glspl{mn}.
\begin{definition}[Quotient of Markov netowrks]
  Assume we are given 2 \glspl{mn}, $\mnf{\xvars}$ and
  $\mnf{\zvars}$. Then the quotient $\mn_{\xvars} / \mn_{\zvars}$
  involves:
  \begin{enumerate}
  \item first finding chordal graph
    $\grh=\gru(\gr{\xvars},\gr{\zvars})$,
  \item then creating the factor set:
    $\Psi = \Phi_{\xvars} \cup \bigl\{1 / \phi \bigm| \phi \in
    \Phi_{\zvars} \bigr\} $
  \end{enumerate}
  This results in the chordal \gls{mn} $\mn=(\grh,\Psi)$.
\end{definition}
\begin{lemma}[Sum-Quotient]\label{thm:sum-quot}
  Let $\mnf{\xvars}$ and $\mnf{\zvars}$ be \glspl{mn} where
  $\zvars\subset\xvars$ and $\mn_{\zvars}$ is a marginalization of
  $\mn_{\xvars}$. Then the Sum-Quotients:
  \begin{equation}
    \label{eq:mgasp-sq}
    \forall \cli \in \clis(\grh) \: :\:
    \SP_{\cli}(\xvals_{\cli}) =
    \sum_{\xvals\in\xdom_{\xvars - \cli}}
    \frac
    {\prod_{\phi\in\Phi_{\xvars}} \phi\left(\xvals_{\cli},\xvals\right)}
    {\prod_{\phi\in\Phi_{\zvars}} \phi\left(\zvals_{\cli},\zvals\right)}
  \end{equation}
  can be obtained using belief propagation over the clique tree of the
  \gls{mn} resulting from the quotient $\mn_{\xvars}/\mn_{\zvars}$.
\end{lemma}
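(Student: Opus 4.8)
The plan is to mirror the proof of \cref{thm:mgasp}, replacing the product construction of \cref{def:mn-merge} with the quotient construction defined just above and again invoking the belief-propagation guarantee of \cite[Corollary~10.2]{koller2009}. First I would form the quotient $\mn_{\xvars}/\mn_{\zvars}$, yielding the chordal Markov network $\mn=(\grh,\Psi)$ with $\grh=\gru(\gr{\xvars},\gr{\zvars})$ and $\Psi=\Phi_{\xvars}\cup\bigl\{1/\phi \bigm| \phi\in\Phi_{\zvars}\bigr\}$. Chordality of $\grh$ is immediate from \cref{def:comp-graph}, and since $\zvars\subset\xvars$, every clique of $\gr{\zvars}$ is a clique of $\grh$ and hence sits inside some maximal clique of $\grh$; this guarantees that each reciprocal factor $1/\phi$ has a valid clique of $\grh$ to which it can be assigned.

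Next I would observe that these reciprocal factors are legitimate inputs to belief propagation. Because $\mn_{\zvars}$ is a marginalization of the strictly positive $\mn_{\xvars}$, each $\phi\in\Phi_{\zvars}$ is strictly positive, so every $1/\phi$ is finite and positive. The clique-belief computation makes no assumption that its factors are normalized or bounded---this is exactly the regime already used in \cref{thm:mgasp}---so the reciprocals are handled identically to ordinary factors. Applying \cite[Corollary~10.2]{koller2009} to the clique tree of $\mn$ with initial factors $\Psi$ then yields, as in \cref{eq:bp-koller}, the beliefs
\begin{equation*}
  \forall\cli\in\clis(\grh) \: : \:
  \beta_{\cli}(\xvals_{\cli})
  = \sum_{\xvals\in\xdom_{\xvars-\cli}} \prod_{\psi\in\Psi}\psi(\xvals,\xvals_{\cli}).
\end{equation*}
Splitting $\Psi$ back into its numerator and reciprocal parts gives $\prod_{\psi\in\Psi}\psi = \bigl(\prod_{\phi\in\Phi_{\xvars}}\phi\bigr) \big/ \bigl(\prod_{\phi\in\Phi_{\zvars}}\phi\bigr)$, which is precisely the summand of \cref{eq:mgasp-sq}; hence $\beta_{\cli}=\SP_{\cli}$ for every $\cli\in\clis(\grh)$, as required.

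The main obstacle is conceptual rather than computational: justifying that belief propagation stays correct when some factors are reciprocals, and thus can be arbitrarily far from being probabilities. The resolution is that the sum-product recursion underlying \cref{eq:bp-koller} is purely algebraic---it requires only a valid factor-to-clique assignment over a chordal graph and never appeals to normalization of $\Psi$---so the strict positivity of the $\Phi_{\zvars}$ factors, supplied by the marginalization hypothesis, is all that is needed to keep every message well-defined. A secondary point worth verifying is that the denominator $\prod_{\phi\in\Phi_{\zvars}}\phi$ depends only on the $\zvars$-coordinates, so that writing it as $\prod_{\phi\in\Phi_{\zvars}}\phi(\zvals_{\cli},\zvals)$ in \cref{eq:mgasp-sq} is consistent with evaluating each reciprocal factor inside the joint product over $\Psi$.
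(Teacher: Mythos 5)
Your overall route is exactly the paper's route: form the quotient $\mn_{\xvars}/\mn_{\zvars}$, run belief propagation on the clique tree of the resulting chordal \gls{mn}, invoke \cite[Corollary~10.2]{koller2009} as in \cref{thm:mgasp}, and split the product of factors back into numerator and reciprocal parts. The gap is in how you dispose of the division-by-zero issue, which is the one substantive point the lemma's extra hypothesis exists to handle. You assert that ``each $\phi\in\Phi_{\zvars}$ is strictly positive'' because $\mn_{\zvars}$ is a marginalization of ``the strictly positive $\mn_{\xvars}$''. Strict positivity of $\mn_{\xvars}$ is not a hypothesis of the lemma, and it fails in precisely the setting the lemma is built for: in \cref{thm:decomp-cond} the factors of $\mn_{\zvars}$ are the marginal probabilities $\partfacs(\sparts_{\pr,\zvars})$, i.e.\ entries of \glspl{cpt} and marginal pmfs, which may be zero (\cref{def:divergence} only requires $\pr$ and $\qr$ to share a support, not to have full support). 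Marginalizing nonnegative factors yields nonnegative values, not strictly positive ones, so your closing claim that ``the strict positivity of the $\Phi_{\zvars}$ factors'' is ``supplied by the marginalization hypothesis'' misreads what that hypothesis gives you; under your reading the hypothesis would be doing no work at all.

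What the marginalization hypothesis actually buys is the implication that whenever the denominator of \cref{eq:mgasp-sq} vanishes, so does the numerator. Each $\phi\in\Phi_{\zvars}$ is a sum, over the eliminated variables, of products of nonnegative factors of $\mn_{\xvars}$; if $\phi(\zvals_{\cli},\zvals)=0$ then every term of that sum is zero, in particular the numerator product at any completion $\xvals$ of the $\zvars$-assignment. Hence, adopting the factor-division convention $0/0=0$ of \cite[Definition~10.7]{koller2009}, the quotient is well defined everywhere and the sum-product machinery of \cref{thm:mgasp} applies verbatim to the factor set $\Phi_{\xvars}\cup\bigl\{1/\phi \bigm| \phi\in\Phi_{\zvars}\bigr\}$. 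To repair your proof, replace the strict-positivity claim with this argument; the rest of your write-up (chordality of $\grh$, assignment of the reciprocal factors to cliques, and the splitting of $\prod_{\psi\in\Psi}\psi$) is correct and matches the paper.
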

\begin{proof}
  The proof follows the exact same argument as the proof for
  \cref{thm:mgasp}. However, since quotients are involved, we need to
  ensure the quotient in \cref{eq:mgasp-sq} is always defined, assuming
  $0/0=0$~\cite[Definition~10.7]{koller2009}. We know that any sum, and
  therefore marginalization, over a product of factors that results in
  zero, implies that the original product of factors is zero as well.
  Therefore, since $\mn_{\zvars}$ is a marginalization of
  $\mn_{\xvars}$, whenever the denominator in \cref{eq:mgasp-sq} is $0$,
  its numerator is as well.
\end{proof}
\begin{corollary}[Decomposition of Conditional Distribution]
  \label{thm:decomp-cond}
  Let $\zvars,\yvars\subset\xvars$; $\zvars\cap\yvars=\emptyset$; and
  $\wvars=\yvars\cup\zvars$. Then the conditional distribution of a
  \gls{dm}, $\pr_{\yvars\mid\zvars}$, can be expressed as the quotient
  $\pr_{\wvars}/\pr_{\zvars}$. Both $\pr_{\wvars}$ and $\pr_{\zvars}$
  can be expressed as the chordal \glspl{mn}
  $\mn_{\wvars}=(\sgr(\sparts_{\pr,\wvars}),
  \partfacs(\sparts_{\pr,\wvars}))$ and
  $\mn_{\zvars}=(\sgr(\sparts_{\pr,\zvars}),
  \partfacs(\sparts_{\pr,\zvars}))$. The quotient
  $\mn_{\wvars}/\mn_{\zvars}$ then results in the \gls{mn}
  $\mn_{\yvars\mid\zvars}^{(\pr)}=(\grh,
  \partfac^{c}(\sparts_{\pr,\wvars},\sparts_{\pr,\zvars}))$ where:
  \begin{gather*}
    \grh = \gru(\sgr(\sparts_{\pr,\wvars}),\sgr(\sparts_{\pr,\zvars}))
    = \settocli(\sparts_{\pr,\zvars})\\
    \partfac^{c}(\sparts_{\pr,\wvars},\sparts_{\pr,\zvars}) :=
    \partfacs(\sparts_{\pr,\wvars}) \cup \bigl\{
    1/\partfac \bigm| \partfac \in \partfacs(\sparts_{\pr,\zvars})
    \bigr\}
  \end{gather*}
  and the product of the factors in $\mn_{\yvars\mid\zvars}^{(\pr)}$ is
  a decomposition of the conditional distribution
  $\pr_{\yvars\mid\zvars}$.
\end{corollary}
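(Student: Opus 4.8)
The plan is to reduce the statement to the elementary identity $\pr_{\yvars\mid\zvars}=\pr_{\wvars}/\pr_{\zvars}$ and then to read off the right-hand side as the Markov-network quotient in the statement. First I would observe that, since $\wvars=\yvars\cup\zvars$, the chain rule of probability makes the conditional literally the ratio of the two marginals $\pr_{\wvars}$ and $\pr_{\zvars}$. Applying the Decomposition of Marginal Distribution corollary (\cref{eq:marg-dist-decomp}) separately to each marginal rewrites them as products of marginalised factors, $\pr_{\wvars}=\prod_{\spart\in\sparts_{\pr,\wvars}}\partfac_{\spart\cap\wvars}[\pr]$ and $\pr_{\zvars}=\prod_{\spart\in\sparts_{\pr,\zvars}}\partfac_{\spart\cap\zvars}[\pr]$, while \cref{thm:n-graph-chordal} guarantees that these products are genuine chordal Markov networks $\mn_{\wvars}$ and $\mn_{\zvars}$ supported on the $\sparts$-graphs $\sgr(\sparts_{\pr,\wvars})$ and $\sgr(\sparts_{\pr,\zvars})$.

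Second, I would feed $\mn_{\wvars}$ and $\mn_{\zvars}$ into the Quotient of Markov Networks construction. Its factor set is by definition $\partfac^{c}(\sparts_{\pr,\wvars},\sparts_{\pr,\zvars})=\partfacs(\sparts_{\pr,\wvars})\cup\{1/\partfac\mid\partfac\in\partfacs(\sparts_{\pr,\zvars})\}$, so the product of all its factors telescopes to $\bigl(\prod_{\spart\in\sparts_{\pr,\wvars}}\partfac_{\spart\cap\wvars}[\pr]\bigr)\big/\bigl(\prod_{\spart\in\sparts_{\pr,\zvars}}\partfac_{\spart\cap\zvars}[\pr]\bigr)=\pr_{\wvars}/\pr_{\zvars}=\pr_{\yvars\mid\zvars}$, which is precisely the asserted decomposition. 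The reciprocals cause no trouble because a decomposable model is strictly positive, and any residual indeterminacy is absorbed by the same $0/0=0$ convention already invoked for \cref{thm:sum-quot}; hence this algebraic half of the corollary requires no new tools.

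The step I expect to be the crux is the graph identity $\grh=\gru(\sgr(\sparts_{\pr,\wvars}),\sgr(\sparts_{\pr,\zvars}))=\settocli(\sparts_{\pr,\zvars})$ (read on the variables $\wvars$ that actually carry the conditional). The key observation that keeps this tractable is that the computation graph $\gru$ is only required to be \emph{some} chordal supergraph of the union of its arguments, so the task is not to compute a minimal triangulation but to certify that $\settocli(\sparts_{\pr,\zvars})$ is an admissible choice. I would prove this in two moves. First, containment: because $\zvars\subset\wvars$ gives $\bar{\wvars}\subseteq\bar{\zvars}$, grouping maximal cliques of $\grP$ by a shared variable of the larger set $\bar{\zvars}$ merges at least as many cliques as grouping by $\bar{\wvars}$, so every block of $\sparts_{\pr,\wvars}$ lies inside a block of $\sparts_{\pr,\zvars}$; consequently every edge of $\sgr(\sparts_{\pr,\wvars})$ and of $\sgr(\sparts_{\pr,\zvars})$ falls within a single block of $\sparts_{\pr,\zvars}$ and is therefore an edge of $\settocli(\sparts_{\pr,\zvars})$. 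Second, chordality: $\settocli(\sparts_{\pr,\zvars})$ arises by merging adjacent cliques of the clique tree of $\grP$ and then deleting $\bar{\wvars}$, so it is chordal by exactly the argument of \cref{thm:n-graph-chordal}. Together these make $\settocli(\sparts_{\pr,\zvars})$ a legitimate value of $\grh$, and since each factor in $\partfac^{c}$ is supported on $\spart\cap\wvars$ or $\spart\cap\zvars$ for some block $\spart$, every factor lies within a maximal clique of this graph, completing the identification of the quotient network. The genuinely delicate part is the containment bookkeeping: one must track that the $\yvars$-variables shared between distinct $\sparts_{\pr,\wvars}$-blocks are exactly what glue those blocks into a common $\sparts_{\pr,\zvars}$-block, and getting this merging argument right---rather than the surrounding algebra---is where the real work lies.
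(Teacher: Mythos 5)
Your proposal is correct and follows essentially the route the paper itself takes: the paper states \cref{thm:decomp-cond} without a separate proof, treating it as an immediate consequence of the chain rule $\pr_{\yvars\mid\zvars}=\pr_{\wvars}/\pr_{\zvars}$, the marginal factorization of \cref{sec:marg-factorise}, and the quotient construction---exactly the skeleton of your argument. Your only substantive addition is the refinement argument certifying that $\settocli(\sparts_{\pr,\zvars})$ is an admissible computation graph (every block of $\sparts_{\pr,\wvars}$ lies inside a block of $\sparts_{\pr,\zvars}$ since $\bar{\wvars}\subseteq\bar{\zvars}$), a claim the paper asserts but never justifies; your version of it is sound, including the correct reading that $\gru$ need only be \emph{some} chordal supergraph of its arguments rather than a minimal one.
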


With \cref{thm:decomp-cond} we now have all the tools needed to tackle
computing $\conddiv$.



\begin{theorem}\label{thm:comp-cond-complx-else}
  Let $\zvars,\yvars\subset\xvars$; $\zvars\cap\yvars=\emptyset$; and
  $\wvars=\yvars\cup\zvars$. The complexity of computing the conditional
  $\ab$-divergence between two \glspl{dm}, $\dmp$ and $\dmq$,
  $\conddiv$, when either $\alpha$ or $\beta$ is $0$ is:
  \begin{align*}
    \conddiv \in \mathcal{O}\left(
    \abs{\xvars} \cdot 2^{\tw(\grh) + 1}\right)
  \end{align*}
  where
  $\grh=\gru(\settocli(\sparts_{\pr,\zvars}),
  \settocli(\sparts_{\qr,\zvars}))$.
\end{theorem}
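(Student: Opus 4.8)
The plan is to reduce the computation of $\conddiv$ to computing the functional $\func$ between the two chordal Markov-network representations of the conditional distributions supplied by \cref{thm:decomp-cond}, and then to charge the whole cost to a single belief-propagation sweep over the junction tree of $\grh$, exactly as \cite{lee2023} does for the joint divergence in \cref{eq:joint-decomposed,eq:joint-sp}.

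First I would expand $\conddiv$ using \cref{def:cond-div} together with the $\func$-representation of the $\ab$-divergence in \cref{eq:ab-div-f-expr}, which holds whenever $(\alpha,\beta)\neq(0,0)$:
\[
\conddiv=\sum_{\zvals\in\zdom}\prz(\zvals)\sum_{i}c_{i}\,\func(\prygz,\qrygz;g_{i},h_{i},g_{i}^{*},h_{i}^{*}).
\]
Because each $\func(\prygz,\qrygz;\cdots)$ is itself a sum over $\yvals$, the weighted outer sum over $\zvals$ and the inner sums over $\yvals$ merge into a fixed number of sums over $\wvals=(\yvals,\zvals)$. I would then substitute the quotient decompositions $\prygZ=\pr_{\wvars}/\prz$ and $\qrygZ=\qr_{\wvars}/\qrz$ from \cref{thm:decomp-cond}, whose factor products are supported on $\settocli(\sparts_{\pr,\zvars})$ and $\settocli(\sparts_{\qr,\zvars})$ respectively.

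Next I would exploit the multiplicativity of $g,h,g^{*},h^{*}$ from \cref{def:func}, which passes through the quotients since a quotient is a product with a reciprocal factor. As in the joint case, each functional then factors over the individual clique and separator factors of the two conditional networks, leaving, for every clique of the merged graph, an inner Sum-Quotient $\SP_{\cli}$ over those factors together with the weighting factors of $\prz$. The merged graph is exactly $\grh=\gru(\settocli(\sparts_{\pr,\zvars}),\settocli(\sparts_{\qr,\zvars}))$, so these $\SP_{\cli}$ are precisely the quantities of \cref{thm:mgasp,thm:sum-quot} and can all be obtained by one belief-propagation run on the junction tree of $\grh$. A single sweep costs $\mathcal{O}(\abs{\xvars}\cdot 2^{\tw(\grh)+1})$, since the junction tree has $\mathcal{O}(\abs{\xvars})$ cliques and each clique potential has at most $2^{\tw(\grh)+1}$ entries. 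As the number of functional terms $c_{i}$ is a constant independent of the problem size, the total cost remains $\mathcal{O}(\abs{\xvars}\cdot 2^{\tw(\grh)+1})$.

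The main obstacle is bookkeeping rather than a single hard estimate. I must verify that folding the weights $\prz(\zvals)$ and the reciprocal factors $1/\partfac$ from \cref{thm:decomp-cond} into the sum-product does not inflate the treewidth past $\tw(\grh)$; since the $\prz$ factors are supported on $\settocli(\sparts_{\pr,\zvars})$, a subgraph of $\grh$, they do not. I must also ensure every quotient stays well-defined, which is exactly what \cref{thm:sum-quot} guarantees: because $\mn_{\zvars}$ is a marginalization of $\mn_{\wvars}$, the denominator vanishes only where the numerator does, so the convention $0/0=0$ is never actually reached. Invoking it lets me treat the division uniformly and conclude that the entire computation amounts to a single belief-propagation sweep on $\grh$, giving the stated bound.
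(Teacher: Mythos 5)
Your proposal is correct and follows essentially the same route as the paper's own proof: expand the conditional $\ab$-divergence via \cref{def:cond-div} and \cref{eq:ab-div-f-expr}, substitute the quotient decompositions of \cref{thm:decomp-cond}, use the multiplicativity of $g,h,g^{*},h^{*}$ together with the additivity of $L=\log$ to reduce everything to sum-product-quotients over the cliques of $\grh=\gru(\settocli(\sparts_{\pr,\zvars}),\settocli(\sparts_{\qr,\zvars}))$ (with the $\prz$ factors folded in without raising the treewidth), and obtain these via belief propagation as in \cref{thm:mgasp,thm:sum-quot}. The only cosmetic difference is bookkeeping---you charge one global calibration at $\mathcal{O}(\abs{\xvars}\cdot 2^{\tw(\grh)+1})$, while the paper counts $\mathcal{O}(\abs{\xvars})$ log-factors at $\mathcal{O}(2^{\tw(\grh)+1})$ each---which yields the identical bound.
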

\begin{proof}
  See Appendix~\ref{app:proof-cond-else}.
\end{proof}

\begin{theorem}\label{thm:comp-cond-complx-00}
  $\yvars\subset\xvars$; $\zvars\cap\yvars=\emptyset$; and
  $\wvars=\yvars\cup\zvars$.
  The complexity of $\conddiv$ when $\alpha,\beta=0$ is:
  \begin{align*}
    \conddivzero \in \mathcal{O}\left(
    \abs{\xvars}^{2} \cdot 2^{\tw(\grh) + 1}\right)
  \end{align*}
  where
  $\grh=\gru(
  \settocli(\sparts_{\pr,\zvars}),\settocli(\sparts_{\qr,\zvars}))$.
\end{theorem}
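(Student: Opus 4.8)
The plan is to reduce this $\alpha,\beta=0$ case to the belief-propagation machinery of \cref{thm:mgasp}, with the extra factor of $\abs{\xvars}$ over \cref{thm:comp-cond-complx-else} arising purely from squaring the log-ratio. First I would apply \cref{thm:decomp-cond} to express both conditionals as products of factors over the cliques of the chordal graphs $\grh_{\pr}=\settocli(\sparts_{\pr,\zvars})$ and $\grh_{\qr}=\settocli(\sparts_{\qr,\zvars})$, so that taking logarithms turns $\log\prygZ$ into a sum of log-factors (each of the form $\log\varphi$ or $-\log\varphi$ for the reciprocal factors $1/\varphi$) drawn from $\partfac^{c}(\sparts_{\pr,\wvars},\sparts_{\pr,\zvars})$, and likewise for $\log\qrygZ$. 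Since each quotient decomposition of \cref{thm:decomp-cond} uses $\abs{\sparts_{\cdot,\wvars}}+\abs{\sparts_{\cdot,\zvars}}=\bigO(\abs{\xvars})$ factors, there are $m=\bigO(\abs{\xvars})$ signed log-factors $a_{k}$ in total, each supported on a clique $\cli_{k}$ of $\grh=\gru(\grh_{\pr},\grh_{\qr})$.

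Substituting into \cref{def:cond-div} specialised to $\alpha,\beta=0$ and writing $\wvars=\yvars\cup\zvars$ gives
\begin{equation*}
  \conddivzero=\tfrac12\sum_{\wvals}\pr_{\zvars}(\zvals)\,\Delta(\wvals)^{2},\qquad \Delta(\wvals):=\sum_{k=1}^{m}a_{k}(\wvals_{\cli_{k}}),
\end{equation*}
where $k$ indexes the signed log-factors. Expanding the square yields $\conddivzero=\tfrac12\sum_{k,l}T_{k,l}$ with $T_{k,l}:=\sum_{\wvals}\pr_{\zvars}(\zvals)\,a_{k}(\wvals_{\cli_{k}})\,a_{l}(\wvals_{\cli_{l}})$. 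This double sum over the $\bigO(\abs{\xvars}^{2})$ ordered clique pairs is exactly what distinguishes this case from \cref{thm:comp-cond-complx-else}, where the divergence is instead a linear combination of a \emph{constant} number of functionals $\func$, each computable by a single sum-product.

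The crucial efficiency point---and the step I expect to be the main obstacle---is to avoid running a separate sum-product for each of the $\bigO(\abs{\xvars}^{2})$ pairs. Instead, for each fixed source factor $a_{k}$ I would obtain $T_{k,l}$ for \emph{all} targets $l$ from one pass. Taking $a_{k}$ together with the factors $\partfacs(\sparts_{\pr,\zvars})$ of $\pr_{\zvars}$ as the input factor set, \cref{thm:mgasp} produces, in a single belief-propagation pass over the junction tree of $\grh$, the beliefs $\beta_{\cli_{l}}(\wvals_{\cli_{l}})=\sum_{\wvals\in\wdom_{\wvars-\cli_{l}}}\pr_{\zvars}(\zvals)\,a_{k}(\wvals_{\cli_{k}})$ at every clique simultaneously, after which $T_{k,l}=\sum_{\wvals_{\cli_{l}}}\beta_{\cli_{l}}(\wvals_{\cli_{l}})\,a_{l}(\wvals_{\cli_{l}})$. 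I would then verify two things: that adjoining the single extra factor $a_{k}$, whose support $\cli_{k}$ is already a clique of $\grh_{\pr}$ or $\grh_{\qr}$ and hence of $\grh$, leaves the computation graph---and therefore the treewidth $\tw(\grh)$---unchanged; and, following the argument of \cref{thm:sum-quot}, that the reciprocal factors $1/\varphi$ entering $a_{k}$ are well defined under the convention $0/0=0$ since $\mn_{\zvars}$ marginalises $\mn_{\wvars}$.

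Finally I would tally the cost. Each of the $m=\bigO(\abs{\xvars})$ source factors triggers one belief-propagation pass costing $\bigO(\abs{\xvars}\cdot 2^{\tw(\grh)+1})$, and extracting $T_{k,l}$ over the $\bigO(\abs{\xvars})$ target cliques costs a further $\bigO(\abs{\xvars}\cdot 2^{\tw(\grh)+1})$ per source. Multiplying the $\bigO(\abs{\xvars})$ passes by the $\bigO(\abs{\xvars}\cdot 2^{\tw(\grh)+1})$ cost of each establishes the claimed bound $\conddivzero\in\bigO(\abs{\xvars}^{2}\cdot 2^{\tw(\grh)+1})$, i.e.\ precisely the bound of \cref{thm:comp-cond-complx-else} with one additional factor of $\abs{\xvars}$ attributable to the squaring of the log-ratio.
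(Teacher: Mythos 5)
Your proposal is correct, and its core is the same as the paper's: decompose both conditionals into (signed) log-factors via \cref{thm:decomp-cond}, expand the square into a double sum over $\bigO(\abs{\xvars}^{2})$ pairs of log-factors, and evaluate each pairwise term as a $\pr_{\zvars}$-weighted sum-product over the computation graph $\grh=\gru(\settocli(\sparts_{\pr,\zvars}),\settocli(\sparts_{\qr,\zvars}))$. Where you genuinely differ is the batching of these $\bigO(\abs{\xvars}^{2})$ terms. The paper runs a \emph{separate} sum-product for every ordered pair $(\phi^{+},\phi^{*})\in\Phi\times\Phi$, each over the graph $\gru(\sgr(\sparts_{\pr,\zvars}),\gr{}(V(\phi^{+})),\gr{}(V(\phi^{*})))$, and charges each run $\bigO(2^{\tw(\grh)+1})$; you instead fix the source factor $a_{k}$, run a single belief-propagation pass with factors $\{a_{k}\}\cup\partfacs(\sparts_{\pr,\zvars})$, and extract $T_{k,l}$ for \emph{all} targets $l$ from the resulting clique beliefs. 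Your accounting is arguably the more defensible one: a stand-alone elimination of all of $\wvars$ down to a scalar costs $\bigO(\abs{\xvars}\cdot 2^{\tw(\grh)+1})$ (one message per clique of the junction tree), so the paper's per-pair figure of $\bigO(2^{\tw(\grh)+1})$ implicitly presupposes exactly the kind of message reuse across targets that you make explicit---done naively, pair-by-pair elimination would yield $\bigO(\abs{\xvars}^{3}\cdot 2^{\tw(\grh)+1})$. Both routes rest on the two supporting facts you verify: every factor scope is a clique of $\grh_{\pr}$ or $\grh_{\qr}$ and hence of $\grh$, so adjoining $a_{k}$ leaves the treewidth unchanged; and the reciprocal factors are well defined under the $0/0=0$ convention of \cref{thm:sum-quot} because $\pr_{\zvars}$ is a marginalization of $\pr_{\wvars}$.
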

\begin{proof}
  See Appendix~\ref{app:proof-cond-00}.
\end{proof}

Therefore, the complexity of computing the conditional $\ab$-divergence
between two \glspl{dm} in the worst case is just the complexity when
both $\alpha$ and $\beta$ are $0$, as expressed in
\cref{thm:comp-cond-complx-00}.

\section{Experiments with QMNIST}\label{sec:qmnist}
We demonstrate the basic functionality of our method by analyzing
distributional changes within image data. Since image data is
interpretable by most humans, this will allow us to verify any findings
from the analysis of our method.

We consider the QMNIST dataset~\cite{yadav2019}. It was 
proposed as a reproducible recreation of the original MNIST
dataset~\cite{lecun1994,bottou1994}, using data from the NIST
Handprinted Forms and Characters Database, also known as NIST
Special Database 19~\cite{grother1995}.

An interesting fact about NIST Special Database 19 is that it consists of
handwriting from both NIST employees and high school students. In
QMNIST, images from NIST employees have the value $0$ and $1$ for the
variable \texttt{hsf}, while images from students have a value of
$4$. In \cref{fig:qmnist-mean}, by taking the mean over all the digits
written by each group, we can observe that the handwriting of NIST
employees tend to be more slanted compared to the
handwriting of the students. The question now is, can our method pick
up on these differences as well?
\begin{figure}
  \centering
  \subfloat[NIST employees]{
    \centering
    \begin{adjustbox}{width=0.22\textwidth}
      \includegraphics{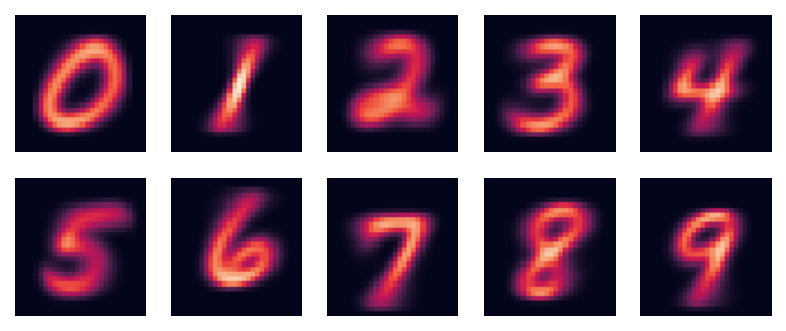}
    \end{adjustbox}
  }
  \hfill
  \subfloat[High school students]{
    \centering \captionsetup{justification=centering}
    \begin{adjustbox}{width=0.22\textwidth}
      \includegraphics{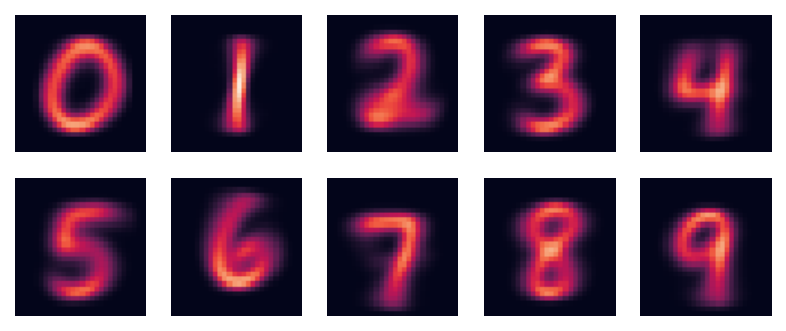}
    \end{adjustbox}
  }
  \caption{Mean over all digits written by (a) NIST employees, and (b)
    high school students}\label{fig:qmnist-mean}
\end{figure}
\begin{figure}
  \centering
  \includegraphics[width=0.48\textwidth]{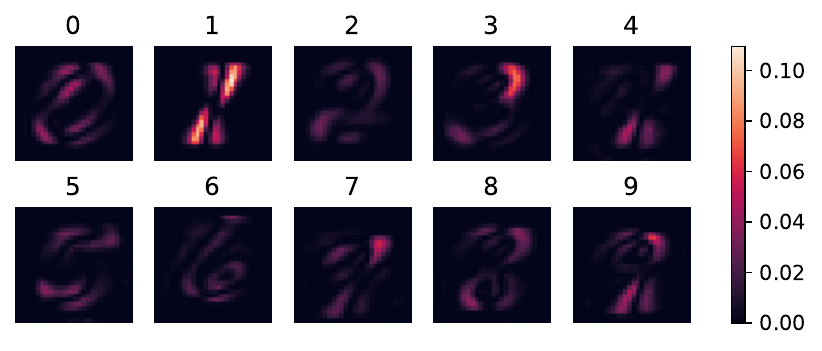}
  \caption{The marginal Hellinger distance over each pixel.}
  \label{fig:qmnist-12}
\end{figure}

In order to test this, we need to learn 10 pairs of \glspl{dm}, one for
each digit---i.e. 0, 1, etc.---with each pair containing the \glspl{dm}
learnt over the 2 writer groups---NIST employees and students. Since
each image has a resolution of $28\times28$ pixels, we have $784$
pixels, and therefore $784$ variables, to learn our \glspl{dm} over. We
do this by first learning the chordal graph structure for each
\glspl{dm} using
\emph{Chordalysis}~\cite{petitjean2013,petitjean2014,petitjean2015}.
Specifically from which, we employ the \emph{Stepwise Multiple Testing}
learner~\cite{webb2016} with a p-value threshold of $0.05$. Once we have
the structure for each \glspl{dm}, we learn their parameters using
\emph{pgmpy}~\cite{ankan2015}, specifically its
\texttt{BayesianEstimator} parameter estimator with Dirichlet priors and
a pseudo count of $1$. Once done, we now have 2 \glspl{dm} for each type
of digit.

By marginalizing the learnt \glspl{dm} over each of the $28\times 28$
pixels, and computing the marginal Hellinger distance between the
marginalized \glspl{dm} in each pair, we can immediately observe from
\cref{fig:qmnist-12} that the digit with the most pronounced difference
is $1$. Between the two writer groups, the top and bottom half of
the digit $1$ changes drastically, while the middle of the digit remains
relatively unchanged. This corresponds to the difference between how a
regular and slanted $1$ is written.

We can also observe some clear changes to the top right and bottom half
of digits $4$ and $9$, with these changes looking similar in both
digits. This corresponds to the observation from \cref{fig:qmnist-mean}
that both digits have a similar downward stroke on the right, which
changes similarly between the two writer groups. Other examples of
changes observable in \cref{fig:qmnist-mean} that are also highlighted
by our method in \cref{fig:qmnist-12} are: the digit $0$ changing
relatively more at the top and bottom left, $\{3,2,8\}$ mostly
changing mostly in the top right and bottom left, and $5$ changing
mostly at the top and bottom---but not as much in the middle.

However, our method seems to have picked up on more subtle changes that
that a brief visual inspection of \cref{fig:qmnist-mean} could have
missed.  For instance, although in \cref{fig:qmnist-mean} it might look
like digit $7$ differs the most in the downwards stroke at its bottom
half, \cref{fig:qmnist-12} indicates that the area with the greatest
difference for $7$ is in its top right instead.



\section{Quantum Error Analysis}

Our second set of experiments addresses the quantification of the
effective error behavior of quantum circuits.  A quantum circuit over
$n$ qubits is a $2^n \times 2^n$ unitary complex matrix $C$.  It acts on
a quantum state $\ket{\phi}$---a $2^n$-dimensional, $\ell_2$-normalized
complex vector.  The computation of an $n$-qubit quantum computer with
circuit $C$ can hence be written as $\ket{\psi}=C\ket{\phi}$.  Denoting
the $i$-th standard basis unit vector by $\boldsymbol{e}_i$, any quantum
state can be decomposed as
$\sum_{i=0}^{2^n-1} \alpha_i \boldsymbol{e}_i$ with
$\sum_{i=0}^{2^n-1} |\alpha_i|^2 = 1$. Clearly, the $2^n$-dimensional
output vector is never realized on a real-world quantum computer, as
this would require an exponential amount of memory.  Instead, a quantum
computer outputs a random $n$-bit string, where the bit string that
represents the standard binary encoding of the unsigned integer $i$ is
sampled with probability $\mathbb{P}(i)=|(C\ket{\phi})_i|^2$, according
to the Born-rule. However, due to various sources of error, samples from
real-world quantum computers are generated from a (sometimes heavily)
distorted distribution $\mathbb{Q}$.

In what follows, we show how to apply marginal divergences
$D_{\AB}^{(\alpha,\beta)}( \pr_{\zvars} \mid\mid \qr_{\zvars})$ for
quantifying the error for specific (subsets of) qubits. To this end, we
conduct two sets of experiments.

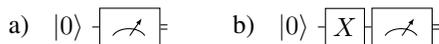
\begin{figure}
  \centering
  \centering
  \begin{tikzpicture}
    \node (0,0) {a)};
  \end{tikzpicture}
  \begin{tikzpicture}
    \begin{yquant}
      qubit {$\ket0$} q[1];
      measure q[0];
    \end{yquant}
  \end{tikzpicture}
  \hspace{0.5cm}
  \begin{tikzpicture}
    \node (0,0) {b)};
  \end{tikzpicture}
  \begin{tikzpicture}
  \begin{yquant}
    qubit {$\ket0$} q[1];
    x q[0];
    measure q[0];
  \end{yquant}
  \end{tikzpicture}
  \caption{Quantum circuits used in our single qubit experiments.}
  \label{fig:oneqcircuits}
\end{figure}

\begin{figure*}
  \includegraphics[width=0.32\textwidth]{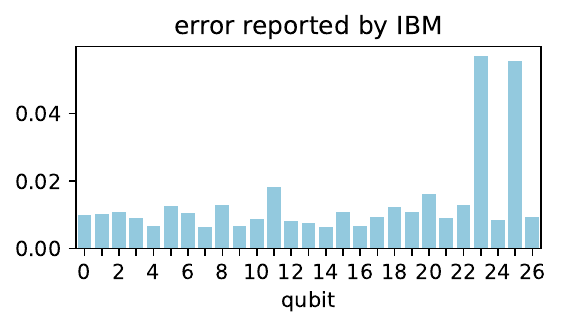}
  \includegraphics[width=0.32\textwidth]{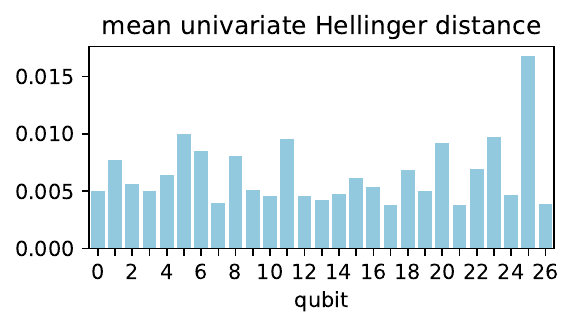}
  \includegraphics[width=0.32\textwidth]{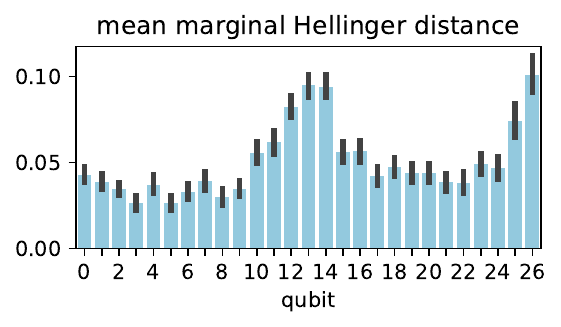}
  \caption{Left: Single qubit errors for \enquote{ibmq\_ehningen} as
    reported by IBM after we ran the experiments (June 1st,
    3am). Center: Mean Hellinger distance over the two circuits used in
    the single qubit experiment. Right: Mean univariate \emph{marginal}
    Hellinger distance over all 300 pairs of
    \glspl{dm}.}\label{fig:errors}
\end{figure*}

First, we consider simple single qubit circuits, shown in
\cref{fig:oneqcircuits}.  In \cref{fig:oneqcircuits} a), $C$ is
the identity matrix and the input state $\ket{\phi}$ is set to $\ket0$,
i.e., the zero state. This circuit is deterministic---an exact
simulation of this circuit will always output $0$. In
\cref{fig:oneqcircuits} b), $C=\begin{pmatrix}0&1\\1&0\end{pmatrix}$
is the so-called Pauli-$X$ matrix and the input state $\ket{\phi}$ is
set to $\ket0$. An exact simulation of this circuit will always output
$1$.

Nevertheless, on a real-world quantum computer, there is a small
probability, that both circuits will deliver the incorrect
result. Moreover, on quantum processors with more than one qubit, the
error depends on the specific hardware qubit that is selected to run the
circuit---each hardware qubit has its own error behavior. The precise
error is typically unknown, but quantum hardware vendors deliver rough
estimates of these quantities.

Here, we employ \enquote{ibmq\_ehningen}, an IBM Falcon r5.11
superconducting quantum processing unit (QPU) with 27 qubits.  The
corresponding single qubit errors that IBM reports for the time we ran
our experiments are provided in \cref{fig:errors} (left).  Wired
connections between qubits are shown in \cref{fig:coupling-map}.  We run
both circuits $N=10000$ times on each of the 27 qubits, resulting in a
total $2\times 27\times 10000 = 540000$ samples for the single qubit
experiment.

For the second set of experiments, we consider the quantum Fourier
transform (QFT) \cite{coppersmith2002}. QFT is an essential building
block of many quantum algorithms, notably Shor's algorithm for factoring
and computing the discrete logarithm, as well as quantum phase
estimation. We refer to \cite{NielsenChuang2016} for an elaborate
introduction to these topics.

Here, we run the QFT on $300$ uniformly random inputs on the same 27
qubit QPU as before. Each circuit is sampled $N=10000$ times, resulting
in a total of $2\times 10^6$ samples for the QFT experiment.

For the sake of reproducibility, the circuits, data, and python scripts
will be made available after acceptance of this manuscript.

\subsection{Single Qubit Divergence}\label{sec:quantum-single}
During transpilation\footnote{Transpilation is related to compilation: the qubits and 
operations used in a logical quantum circuit are mapped to the resources of some real-world
QPU architecture.}, a sub-set of all available
qubits is chosen to realize the logical circuit. The user can decide to consider only 
a specific sub-set of candidate qubits. 
Thus, reliable information about the
effective error behavior of each qubit when used within a quantum
circuit is of utmost importance.

Based on data from the single qubit experiments, we estimate
$\mathbb{Q}_i$, that is, the probability that the $i$-th qubit takes a
specific value (0 or 1). Moreover, we obtain an empirical estimate of
the ground truth $\mathbb{P}_i$ from the exact simulated results, which
allows us to compute the Hellinger distance between $\mathbb{P}$ and
$\mathbb{Q}$ for each qubit.  We do this for both circuits from
\cref{fig:oneqcircuits}. The average qubit-wise Hellinger distance is
reported in \cref{fig:errors} (center).

The results help us to quantify the \underline{isolated} error behavior
of each qubit. We see that our results qualitatively agree with the
errors reported by IBM, e.g., the top-5 qubits which show the largest
divergence $\{5,11,20,23,25\}$ also accumulate the largest error mass as
reported by IBM. Explaining all the discrepancies is neither
possible---since the exact procedure for reproducing the IBM
errors in unknown---nor required. The result clearly shows that the
numbers reported by IBM quantify the \underline{isolated} error behavior
of each qubit. We will now see that one obtains a different picture when
qubits are used within a larger circuit (which is the most common
use-case).

To this end, we consider the results from the QFT circuits. We learn 300
pairs of \glspl{dm} on the samples using the same methodology as in
\cref{sec:qmnist}. We then compute the marginal divergence for each
qubit and report the average over all 300 QFT circuits in
\cref{fig:errors} (right).  As these numbers are marginals of the full
joint divergence, we now quantify how each qubit behaves on average when
it is part of a larger circuit.

We see that the result mostly disagrees with both, the errors reported
by IBM and the single qubit divergences reported by us. First, we see
that qubits 12, 13, and 14---the qubits directly in the middle of the
coupling map shown in \cref{fig:coupling-map}--which are qubits that,
individually, did not have much error according to both our experiments
in \cref{fig:errors} (center) and the reported errors from IBM. However,
it is not a coincidence that most of these high error qubits are
concentrated in the middle of the circuit. Unlike in an idealized QPU,
the qubits in the physical QPU we used are not all coupled with each
other. Therefore, routing has to take place whenever a quantum circuit
operates on pairs of qubits which are not physically connected. As a
result, the states of the qubits in the middle of the circuit---qubits
12, 13, and 14---needs to be pushed around and stored in other qubits
more often. This is done in order to make way for routes between qubits
of opposing sides of the circuit. The more frequently these states get
pushed around, the greater the chance of an error occurring, hence why
these qubits in the middle exhibit the greatest amount of error.  To the 
best of our knowledge no quantum hardware vendor reports such numbers,
although they are highly important when logic qubits are transpiled to a 
circuit that is mapped to the actual hardware.

From \cref{fig:errors} (right), we can also observe a spike in the
Hellinger distance for qubits 25 and 26.  While qubit 25 is a heavy
hitter in \cref{fig:errors}, this does not apply for qubit 26. Moreover,
qubit 26 is a ``leaf'' qubit (w.r.t \cref{fig:coupling-map}) and hence
not used heavily for routing. Therefore, our approach reveals a 
type of error that is not detected by standard methods, e.g., cross-talk
or three-level-system activity.

\subsection{High-Order Divergence}\label{sec:quantum-multi}

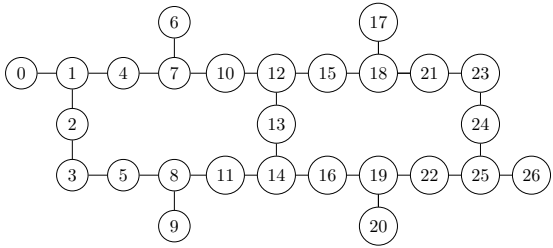
\begin{figure}
  \centering
      \begin{adjustbox}{width=0.4\textwidth}
      \begin{tikzpicture}[vert/.style={circle, draw=black}]
        \node[vert] (0) at (0,3) {$0$};
        \node[vert] (1) at (1,3) {$1$};
        \node[vert] (2) at (1,2) {$2$};
        \node[vert] (3) at (1,1) {$3$};

        \node[vert] (4) at (2,3) {$4$};
        \node[vert] (5) at (2,1) {$5$};

        \node[vert] (6) at (3,4) {$6$};
        \node[vert] (7) at (3,3) {$7$};
        \node[vert] (8) at (3,1) {$8$};
        \node[vert] (9) at (3,0) {$9$};

        \node[vert] (10) at (4,3) {$10$};
        \node[vert] (11) at (4,1) {$11$};

        \node[vert] (12) at (5,3) {$12$};
        \node[vert] (13) at (5,2) {$13$};
        \node[vert] (14) at (5,1) {$14$};

        \node[vert] (15) at (6,3) {$15$};
        \node[vert] (16) at (6,1) {$16$};

        \node[vert] (17) at (7,4) {$17$};
        \node[vert] (18) at (7,3) {$18$};
        \node[vert] (19) at (7,1) {$19$};
        \node[vert] (20) at (7,0) {$20$};

        \node[vert] (21) at (8,3) {$21$};
        \node[vert] (22) at (8,1) {$22$};

        \node[vert] (23) at (9,3) {$23$};
        \node[vert] (24) at (9,2) {$24$};
        \node[vert] (25) at (9,1) {$25$};

        \node[vert] (26) at (10,1) {$26$};

        \draw (0) -- (1);
        \draw (2) -- (1);
        \draw (2) -- (3);

        \draw (1) -- (4);
        \draw (3) -- (5);

        \draw (4) -- (7);
        \draw (5) -- (8);
        \draw (7) -- (6);
        \draw (8) -- (9);

        \draw (7) -- (10);
        \draw (8) -- (11);

        \draw (10) -- (12);
        \draw (11) -- (14);
        \draw (12) -- (13);
        \draw (14) -- (13);

        \draw (12) -- (15);
        \draw (14) -- (16);

        \draw (15) -- (18);
        \draw (16) -- (19);
        \draw (18) -- (17);
        \draw (19) -- (20);

        \draw (18) -- (21);
        \draw (19) -- (20);
        \draw (18) -- (17);
        \draw (19) -- (20);

        \draw (18) -- (21);
        \draw (19) -- (22);

        \draw (21) -- (23);
        \draw (22) -- (25);
        \draw (23) -- (24);
        \draw (25) -- (24);

        \draw (25) -- (26);
      \end{tikzpicture}
    \end{adjustbox}
    \caption{The coupling map for the 27 qubit IBM Falcon r5.11 QPU}
    \label{fig:coupling-map}
\end{figure}

\begin{figure}
\centering
  \includegraphics[width=0.49\textwidth]{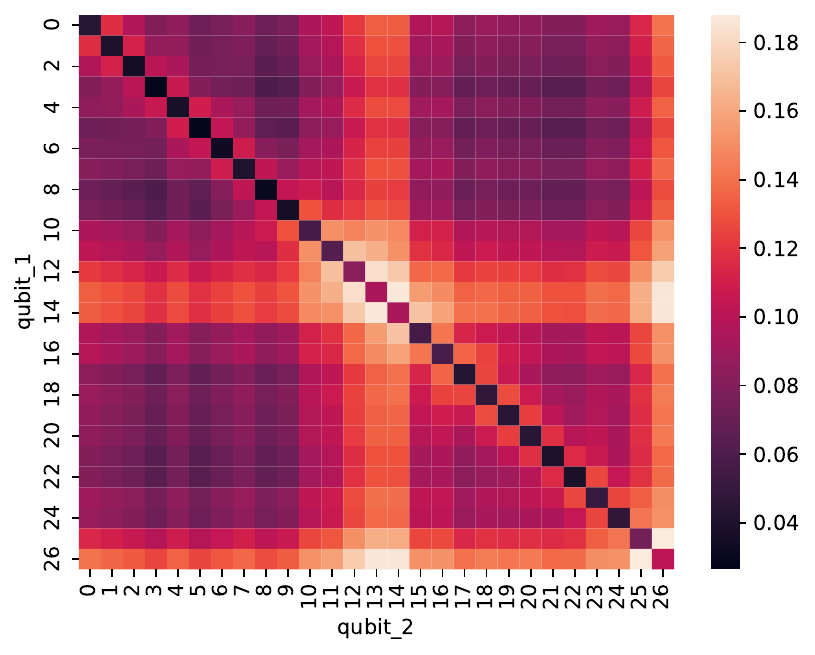}
  \caption{Mean pairwise marginal Hellinger distance over all pairs of
    \glspl{dm}.}\label{fig:marg-2-mean}
\end{figure}

We finish our analysis by investigating marginal divergence beyond
single qubits. \cref{fig:marg-2-mean} shows the result of computing the
pairwise marginal Hellinger distance between the \glspl{dm} in each
pair, taking the mean over all the pairs. The resulting heatmap is
symmetrical due to Hellinger distance being a symmetric divergence. The
first immediate observation is the bright bands over pairs that include
qubits that already have a high univariate marginal Hellinger
distance---such as qubits 12, 13, 14, 25, and 26. Another interesting
observation is the bright band that occurs across the diagonal of the
heatmap. We believe this is due to higher order interactions between
coupled qubits, that do not necessarily exist between qubits that are
multiple jumps away from each other. There are multiple causes for these
higher-order interactions, ranging from unintended physical
processes---such as cross-talk~\cite{ketterer2023,perrin2023}---to
limitations of current QPUs---such as states being swapped between
qubits solely for the purposes of routing.

These higher-order interactions are not just limited to pairwise
interactions. As an example, we have discovered 70 cases where a 3-tuple
of qubits, $A$, has a greater Hellinger distance than some other 3-tuple
of qubits, $B$, despite the maximum Hellinger distance over all the
pairwise combinations of the qubits in $A$ being less than the minimum
over that of $B$.  This clearly shows that our method is applicable for
recovering high-order error behavior that cannot be deduced by simply
investigating low-order terms.

\section{Conclusion}
In this work, we explained how to compute the marginal and conditional
$\ab$-divergence between any two \glspl{dm}. In the process, we showed
how to decompose the marginal distribution of a \gls{dm} over any subset
of variables, and how this can be used
to decompose the conditional distribution of a \gls{dm} as well.  In
order to compute the marginal and conditional $\ab$-divergence based on
these decompositions, we introduced a notion of the product and quotient
between \glspl{mn}. 

An initial numerical experiment on the image benchmark dataset
QMNIST~\cite{yadav2019} showed that our methods can be applied for
analyzing the differences in handwriting between NIST employees and high
school students.

Finally, we proposed a completely novel error analysis of contemporary
quantum computers, based on marginal divergences. To this end, we
collected more than 2.5 million samples from a 27 qubit
superconducting quantum processor. 
From this data, we found that marginal divergences can yield useful
insights into the effective error behavior of physical qubits. This
procedure can be applied to every quantum computing device and is of
utmost importance for selecting which hardware qubits will eventually
run a quantum algorithm, e.g., for feature selection or probabilistic
inference \cite{mucke23,mucke22,piatkowski2022quantum}.


A restriction of our approach is the limitation to computing the
$\ab$-divergence between \glspl{dm}. Although
this family of divergences is fairly broad, it does not include some
well-known divergences, such as the Jensen-Shannon
divergence~\cite{lin1991}. It might be possible to obtain an
approximation of these divergences by using our approach to compute a
Taylor expansion of these divergences instead. However, more work is
needed to figure out if this is even a viable path forward.

Nevertheless, our work paves the way for the principled analysis of
high-order relations within high-dimensional models. An interesting
direction that arises from our insights is the question which relations
one should analyze, since their number grows combinatorially with
respect to the size of the subsets we wish to analyze. Therefore more
research is urgently needed to develop advanced tools that can aid in
the analysis of higher dimensional subsets.

\appendices
\section{Proof for \cref{thm:comp-cond-complx-else}}\label{app:proof-cond-else}
  Using the product-quotient of \glspl{mn}, we can re-express the
  factors in the log-term of the conditional $\ab$-divergence:
  \begin{align*}
    &\conddiv\\
    &=\sum_{\zvals\in\zdom} \pr_{\zvars}(\zvals)
      \sum_{\yvals\in\ydom}
      g[\pr_{\yvars\mid\zvars}](\yvals\mid\zvals)
      h[\qr_{\yvars\mid\zvars}](\yvals\mid\zvals)\\
    &\qquad\qquad\qquad\quad \log \bigl(
      g^{*}[\pr_{\yvars\mid\zvars}](\yvals\mid\zvals)
      h^{*}[\qr_{\yvars\mid\zvars}](\yvals\mid\zvals)
      \bigr)
  \end{align*}
  as factors of the \gls{mn}, $\mn_{\log}=(\grh_{\log}, \Psi_{\log})$,
  where:
  $\grh_{\log} =\gru(\settocli(\sgr(\sparts_{\pr,\zvars}),
  \settocli(\sgr(\sparts_{\qr,\zvars}))$ and
  $\Psi_{\log} = \bigl\{g^{*}[\partfac] \bigm| \partfac \in
  \partfacs^{c}(\sparts_{\pr, \wvars}, \sparts_{\pr, \zvars}) \bigr\}
  \cup \bigl\{h^{*}[\partfac] \bigm| \partfac \in
  \partfacs^{c}(\sparts_{\qr, \wvars}, \sparts_{\qr, \zvars}) \bigr\}$.
  Therefore, the conditional $\ab$-divergence in this case can be
  expressed as such:
  \begin{align*}
    \conddiv
    &=\sum_{\psi\in\Psi}
      \sum_{\wvals_{L}\in\wdom_{L}}
      \log \psi(\wvals_{L})
      \SPQ_{L}(\wvals_{L})
  \end{align*}
  where $L:=V(\psi)$ and
  \begin{align*}
    \SPQ_{L}(\wvals_{L}) = \sum_{\wvals\in\wdom_{\wvars - L}}
    \pr_{\zvars}(\zvals) 
    {g[\pr_{\yvars\mid\zvars}](\yvals,\zvals)}
    {h[\qr_{\yvars\mid\zvars}](\yvals,\zvals)}
  \end{align*}
  $\SPQ_{L}$ is then a sum-product-quotient that can be obtained by
  carrying out belief propagation on the clique tree of
  $\grh=\gru(\settocli(\sparts_{\pr,\zvars}),
  \settocli(\sparts_{\qr,\zvars}), \sgr(\sparts_{\pr,\zvars}))
  =
  \gru(\settocli(\sparts_{\pr,\zvars}),
  \settocli(\sparts_{\qr,\zvars}))$ with factors
  \begin{align*}
    \Psi =\;
    &\partfacs(\sparts_{\pr,\zvars}) \cup \bigl\{
      g[\partfac] \bigm| \partfac \in
      \partfacs^{c}(\sparts_{\pr, \wvars}, \sparts_{\pr, \zvars})
      \bigr\} \cup\\
    &\quad\bigl\{
      h[\partfac] \bigm| \partfac \in
      \partfacs^{c}(\sparts_{\qr, \wvars}, \sparts_{\qr, \zvars})
      \bigr\}
  \end{align*}
  which has the complexity of $\bigO(2^{\tw(\grh)+1})$. Since we need to
  carry out this process for each factor in $\Psi_{\log}$, and
  $|\Psi_{\log}|\in\bigO(|\xvars|)$, then the final complexity of
  computing $\conddiv$ when either $\alpha$ or $\beta$ is 0 is
  $\bigO(|\xvars|\cdot 2^{\tw(\grh)+1})$.

\section{Proof for \cref{thm:comp-cond-complx-00}}\label{app:proof-cond-00}
  Recall the conditional $\ab$-divergence when $\alpha,\beta=0$:
  \begin{align*}
    &\conddivzero\\
    &=\sum_{\zvals\in\zdom} \pr_{\zvars}(\zvals)
      \sum_{\yvals\in\ydom} \frac{1}{2}
      \left(\log \prygZ(\yvals\mid\zvals) -
      \log \qrygZ(\yvals\mid\zvals)\right)^{2}
  \end{align*}
  We can then express the factors resulting from the division between
  the two conditional distributions as the factors of the \gls{mn}
  $\mn_{\log}=(\gr{},\Phi)$ where:  $\gr{}=\gru(
  \settocli(\sparts_{\pr,\zvars}),\settocli(\sparts_{\qr,\zvars}))$ and
  $\Phi =
  \partfacs^{c}(\sparts_{\pr,\wvars},\sparts_{\pr,\zvars}) \cup \{
  1 / \partfac \mid \partfac \in
  \partfacs^{c}(\sparts_{\qr,\wvars},\sparts_{\qr,\zvars})
  \}$. Therefore:
  \begin{align*}
    &\conddivzero
    =\sum_{\zvals\in\zdom} \frac{\pr_{\zvars}(\zvals)}{2}
      \sum_{\yvals\in\ydom} \Biggl(\sum_{\phi\in\Phi}
      \log \phi(\yvals,\zvals)
      \Biggr)^{2}\\
    &=\frac{1}{2}\sum_{\phi^{+}\in\Phi}
      \sum_{\phi^{*}\in\Phi}\sum_{\zvals\in\zdom}\sum_{\yvals\in\ydom}
      \pr_{\zvars}(\zvals) \log \phi^{+}(\yvals,\zvals)
      \log \phi^{*}(\yvals,\zvals)
  \end{align*}
  The sum-product over $\zdom$ and $\ydom$, for each
  $(\phi^{+},\phi^{*})\in\Phi\times\Phi$, can then be expressed as a
  sum-product over the factors of the \gls{mn} $\mn=(\grh,\Psi)$, where:
  $\grh = \gru(\sgr(\sparts_{\pr,\zvars}),
  \gr{}(V(\phi^{+})), \gr{}(V(\phi^{*})))$, and
  $\Psi=\{\log\phi^{*}, \log\phi^{+}\} \cup
  \partfacs(\sparts_{\pr,\zvars})$.

  In order to find the complexity of computing this sum product, we need
  to find a bound on the treewidth of $\grh$. We know that the treewidth
  of the induced subgraphs $\gr{}(V(\phi^{*}))$ and $\gr{}(V(\phi^{+}))$
  is bounded by the treewidth of $\gr{}$. Furthermore, since
  $E(\sgr(\sparts_{\pr,\zvars}))\subseteq E(\gr{})$, then
  $\tw(\sgr(\sparts_{\pr,\zvars})) \leq \tw(\gr{})$. Therefore the
  complexity of computing the sum-product over $\Psi$ is exponential
  with respect to $\tw(\gr{})$.

  Since we have to obtain this sum-product for each
  $(\phi^{+},\phi^{*})\in\Phi\times\Phi$, and
  $\abs{\Phi\times\Phi}=\abs{\Phi}^{2}\leq\abs{\xvars}^{2}$, the final
  complexity is:
  $\bigO(\conddivzero)\in\bigO(\abs{\xvars}^{2}2^{\tw(\gr{})+1})$
  

\section*{Acknowledgment}
This work was supported by the Australian Research Council award
DP210100072 as well as the Australian Government Research Training
Program (RTP) Scholarship. Additionally, this work has been funded by
the Federal Ministry of Education \& Research of Germany and the state
of North-Rhine Westphalia as part of the Lamarr-Institute for Machine
Learning and Artificial Intelligence.

\bibliographystyle{IEEEtran}

\end{document}